\documentclass{article}

\def\selectedpapertemplate{omniweaving}
\def\omniweavingtemplate{omniweaving}
\def\neuripstwentysixtemplate{neurips2026}

\usepackage{omniweaving_arxiv,times}

\usepackage[utf8]{inputenc} 
\usepackage[T1]{fontenc}    
\makeatletter
\@ifpackageloaded{hyperref}{}{\usepackage{hyperref}} 
\@ifpackageloaded{url}{}{\usepackage{url}}           
\makeatother 
\usepackage{booktabs}       
\usepackage{array}
\usepackage{caption}
\usepackage{amsmath}
\usepackage{amssymb}
\usepackage{amsthm}
\usepackage{amsfonts}       
\usepackage{nicefrac}       
\usepackage{microtype}      
\usepackage{xcolor}         
\usepackage{float}         
\usepackage{wrapfig}
\usepackage{pgfplots}
\pgfplotsset{compat=1.18}
\usepgfplotslibrary{groupplots}

\newcommand{\condvspace}[1]{}

\makeatletter
\@ifundefined{c@theorem}{\newtheorem{theorem}{Theorem}[section]}{}
\@ifundefined{c@lemma}{\newtheorem{lemma}[theorem]{Lemma}}{}
\@ifundefined{c@proposition}{\newtheorem{proposition}[theorem]{Proposition}}{}
\@ifundefined{c@corollary}{}{}
\theoremstyle{definition}
\@ifundefined{c@assumption}{\newtheorem{assumption}[theorem]{Assumption}}{}
\@ifundefined{c@construction}{}{}
\makeatother

\title{\textsc{Precise}: SDE-Consistent Stochastic Sampling for RL Post-Training of Flow-Matching Models} 

\author{%
  Jade Zou$^{1,2,*,\ddagger}$, Tao Huang$^{2,*,\ddagger}$, Weijie Kong$^{2,*}$, Junzhe Li$^{1,2,\ddagger}$, Yue Wu$^{2}$, Qi Tian$^{2}$, \\
  Jiangfeng Xiong$^{2}$, Jianwei Zhang$^{2,\dagger}$, Liefeng Bo$^{2}$, Zhao Zhong$^{2,\S}$\\[2pt]
  $^{1}$Peking University \quad $^{2}$Tencent Hunyuan\\
  $^{*}$ Equal Contribution, $^{\S}$ Corresponding Author, $^{\dagger}$ Project Leader\\
  $^{\ddagger}$ Work done during internship at Tencent Hunyuan
}

\begin{document}

\maketitle

\newcommand{\ignore}[1]{}

\begin{abstract}
    Reinforcement learning (RL) has become an effective way to improve prompt alignment and perceptual quality in diffusion and flow-matching generators. A critical step for applying online RL to flow matching is turning the deterministic sampling trajectory into a stochastic policy, typically by replacing the reverse-time Ordinary Differential Equation (ODE) with a Stochastic Differential Equation (SDE). The stochastic sampler, controlling the exploration behavior and denoising dynamics, is thus part of the policy, and its design can significantly affect the reward optimization performance. We break down the sampler design into two interdependent components: choosing the right amount of stochastic exploration, and discretizing the resulting SDE faithfully at the small step counts used in RL. To address the first component, we analyze the inherent tension between exploration and stability in denoising and derive an SDE schedule that balances the two. Turning to the discretization challenge, we use a toy example to show that existing samplers can deviate from the flow-matching process, either by introducing excessive discretization noise or by relying on heuristic rules that do not guarantee convergence to the data distribution. To address these issues, we propose \textsc{Precise}, a new stochastic sampler that balances effective exploration with stability. Crucially, \textsc{Precise} keeps the denoising trajectory SDE-consistent through a novel approximation that freezes the clean-latent posterior mean, resolving the excess noise issue in standard samplers.
    Extensive experiments demonstrate that this formulation leads to significantly faster and more stable reward optimization via reinforcement learning, achieving state-of-the-art alignment scores (e.g., PickScore, HPSv2.1) while requiring 13.1--53.2\% less wall-clock training time to match the best in-domain performance of prior samplers.
    \ifx\selectedpapertemplate\omniweavingtemplate
    Code is open-source at \url{https://github.com/Tencent-Hunyuan/Precise}.
    \fi
\end{abstract}

\section{Introduction}

Diffusion and flow-matching models are central to visual generation. Diffusion models and score-based SDEs sample by reversing a gradual noising process~\citep{ho2020denoising,song2020score}, while flow matching and rectified flow cast generation as continuous transport~\citep{lipman2022flow,liu2022flow}. Although these pretrained generators provide strong visual priors, they do not directly optimize downstream criteria such as prompt following, perceptual quality, and task-specific constraints. RL post-training addresses this gap by optimizing image-level rewards or preferences via reward supervision, preference optimization, and policy gradients~\citep{xu2023imagereward,wallace2024diffusion,black2023training,fan2023dpok}, with recent extensions to online RL for flow-matching and visual generation models~\citep{liu2025flow,xue2025dancegrpo}.

In online RL, the stochastic sampler is not merely an inference-time choice. It defines the policy whose action probabilities enter the policy-gradient objective~\citep{liu2025flow,xue2025dancegrpo}, determines the exploration available for within-prompt advantage estimation, and controls the denoising trajectory on which the model makes predictions. Thus, for flow-matching models, converting the deterministic ODE into a stochastic sampler creates a sampler-design problem: the sampler must explore enough to support reward optimization while remaining faithful to the denoising dynamics learned during pretraining. This balance is especially delicate because online RL for diffusion and flow-matching models typically restricts the rollout to a small number of denoising steps (e.g., 6-30) to mitigate the massive computational bottleneck of iterative generation. Maintaining faithful trajectory exploration in this extreme low-NFE regime is not merely an efficiency trick, but a fundamental prerequisite for scaling RLHF to high-dimensional continuous domains~\citep{liu2025flow,xue2025dancegrpo}.

To systematically resolve this tension and enable stable exploration under restricted budgets, we deconstruct the stochastic sampler design into two interdependent components. First, the exploration schedule must balance diversity and stability: if the injected noise is too small, sampled groups have limited reward variation and policy improvement is slow; if it is too large, the trajectory can become unstable and reward estimates become less reliable. Second, the finite-step discretization must remain consistent with the intended reverse SDE at a small step count, which becomes harder as the injected noise increases. If the discretization is inconsistent, the model can be forced to extrapolate beyond its training distribution, leading to poor sample quality.

Viewed through the lens of these two requirements, existing samplers expose critical shortcomings. For instance, the Euler-style sampler
used in prior flow-matching RL pipelines~\citep{liu2025flow,xue2025dancegrpo} 
introduces stochasticity, but can add
excess noise under coarse discretization. CPS~\citep{wang2025coefficients} reduces this
artifact by preserving the nominal signal and noise coefficients, but
coefficient preservation alone does not ensure convergence to the correct data distribution,
and the exploration-stability tradeoff still requires careful tuning.
As shown in Figure~\ref{fig:sampler_design}, existing samplers occupy different tradeoff points in the design space.

\begin{figure}[t]
  \centering
  \begin{minipage}[c]{0.40\linewidth}
    \centering
    \includegraphics[height=1.75in]{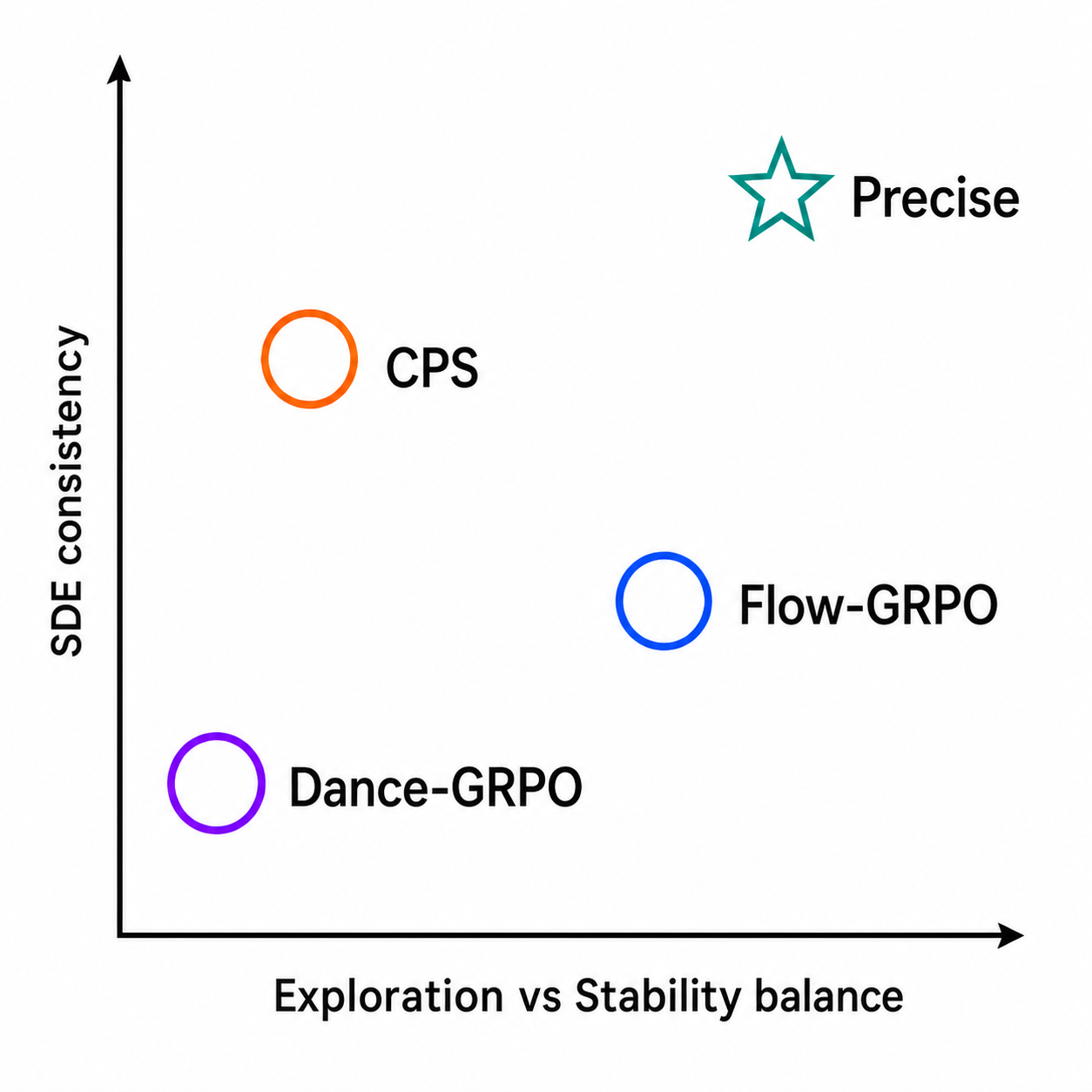}
  \end{minipage}
  \hfill
  \begin{minipage}[c]{0.56\linewidth}
    \centering
    \includegraphics[height=1.75in]{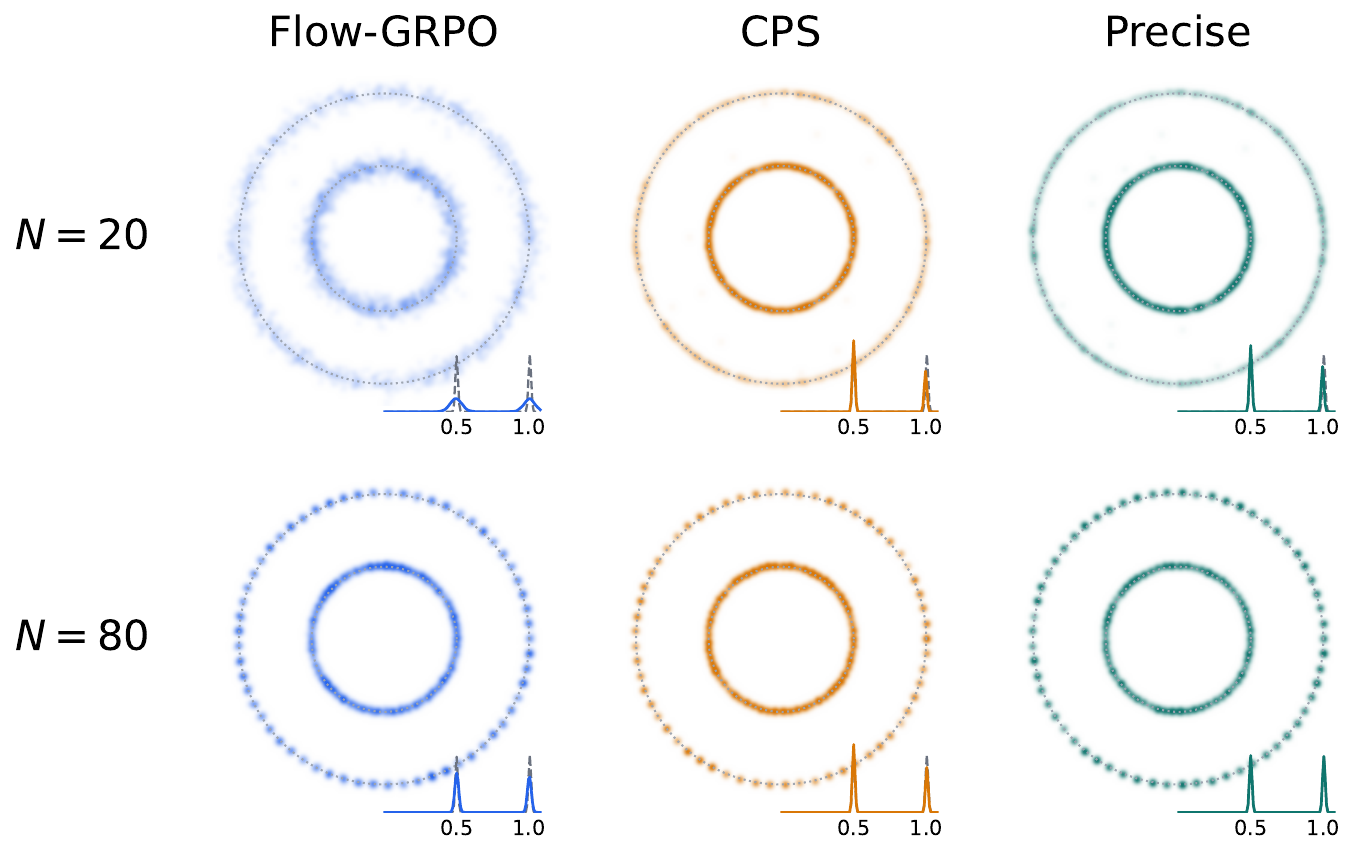}
  \end{minipage}
  \caption{Left: Sampler design has two coupled axes: the exploration-stability balance and SDE consistency. Existing stochastic samplers occupy different trade-offs, while \textsc{Precise} improves both. Right: The double-ring example illustrates failure modes of existing samplers: Flow-GRPO~\citep{liu2025flow} introduces excess noise, while CPS~\citep{wang2025coefficients} biases the marginal distribution.}
  \condvspace{-0.1in}
  \label{fig:sampler_design}
\end{figure}

To transcend these suboptimal tradeoffs, we propose \textsc{Precise}, a principled stochastic sampler explicitly designed for online RL post-training. Our framework resolves the tension through two key observations. First, exploration should respect the local geometry of the denoising process. A log signal-to-noise ratio (logSNR) analysis gives a principled time-varying schedule: inject more noise where the model can reliably return to the data manifold, and less where extra randomness would destabilize the trajectory. Second, the finite-step transition should account for the fact that injected noise immediately changes the latent state on which the velocity and score are predicted. Euler-style samplers freeze these quantities within a step, therefore missing this feedback. We instead observe that the clean-latent posterior mean $\hat{\mathbf{z}}_0(t)=\mathbb{E}[\mathbf{z}_0\mid\mathbf{z}_t]$ is more stable under noise injection, because estimating $\hat{\mathbf{z}}_0(t)$ is naturally a denoising task. This motivates freezing $\hat{\mathbf{z}}_0(t)$ rather than the velocity or score, and computing the exact transition under this approximation. The resulting transition rule remains faithful to the intended SDE while avoiding excess discretization noise.

We empirically evaluate \textsc{Precise} on Stable Diffusion 3.5 Medium (SD3.5-M)~\citep{esser2024scaling} and FLUX.2 Klein 4B Base~\citep{blackforestlabs2025flux2,blackforestlabs2026flux2klein}. \textsc{Precise} achieves state-of-the-art alignment scores (e.g., PickScore, HPSv2.1) while requiring 13.1--53.2\% less wall-clock training time to match the best in-domain performance of prior samplers, and remains competitive on held-out metrics. Additional ablations show that it is robust to hyperparameter and NFE changes, and that both the exploration schedule and the novel approximation contribute to its performance. \ifx\selectedpapertemplate\omniweavingtemplate Code is open-source at \url{https://github.com/Tencent-Hunyuan/Precise}, including the \textsc{Precise} sampler and reproduction scripts. \else We will release our code, including the Precise sampler and reproduction scripts, to ensure reproducibility. \fi

Our contributions are summarized as follows:
\begingroup
\setlength{\leftmargini}{1.4em}
\setlength{\labelsep}{0.45em}
\begin{itemize}
\setlength{\itemsep}{0.15em}
\setlength{\topsep}{0.2em}
\setlength{\parsep}{0pt}



    \item We identify stochastic sampler design in RL as a fundamentally coupled problem of exploration scheduling and few-step reverse-SDE discretization, establishing it as a critical component of the RL policy.
    \item We mathematically prove the systematic biases of prior samplers: Euler-style transitions inject excessive noise at small step counts, while coefficient-preserving (CPS) rules suffer from covariance contraction that biases the marginal distribution.
    \item We introduce \textsc{Precise}, featuring a theoretically principled logSNR-derived exploration schedule and an SDE-consistent closed-form transition. Using a novel "frozen clean-latent posterior mean" approximation, it rigorously eliminates excess discretization noise.
    \item Extensive experiments demonstrate that \textsc{Precise} achieves state-of-the-art alignment scores (e.g., PickScore, HPSv2.1). Crucially, it matches the best prior baselines while requiring 13.1–53.2\% less wall-clock training time, remaining highly robust even in extreme low-budget regimes.
    
\end{itemize}
\endgroup

\section{Related Work}
\begingroup
\setlength{\parskip}{2pt}
\newcommand{\rwpara}[1]{\noindent\textbf{#1}}

\rwpara{Diffusion and flow-matching models.}
Modern visual generators use denoising diffusion and score-SDE formulations~\citep{ho2020denoising,song2020score} or continuous-transport formulations such as flow matching and rectified flow~\citep{lipman2022flow,liu2022flow}. These frameworks underpin many current image and video systems~\citep{esser2024scaling,blackforestlabs2025flux2,wan2025,gao2025seedance,cao2025hunyuanimage,kong2024hunyuanvideo,wu2025hunyuanvideo}.

\rwpara{RL post-training for diffusion and flow-matching models.}
RL post-training aligns visual generators to downstream rewards or preferences via methods such as DDPO, DPOK, ImageReward/ReFL, Diffusion-DPO, and D3PO~\citep{black2023training,fan2023dpok,xu2023imagereward,wallace2024diffusion,yang2024using}. Recent work extends these ideas to ODE-based generators through Dance-GRPO and Flow-GRPO~\citep{xue2025dancegrpo,liu2025flow}; follow-ups improve rollout efficiency~\citep{li2025mixgrpo,li2025branchgrpo,ding2025treegrpo} and optimization stability~\citep{wang2025grpo}. CPS~\citep{wang2025coefficients} is closest to ours: it targets the stochastic sampler used by Flow-GRPO and Dance-GRPO, replacing the Euler transition with a coefficient-preserving rule. Our work further improves the sampler while staying agnostic to the RL objective or training wrapper.

\rwpara{Few-step sampler design.}
Another line of work studies accurate few-step discretization. Fixed-dynamics solvers such as DDIM, DEIS, DPM-Solver, and UniPC reduce inference error without retraining~\citep{song2020denoising,zhang2022fast,lu2022dpm,zhao2023unipc}, while accelerators learn or distill trajectories into very small NFE~\citep{salimans2022progressive,song2023consistency,luo2023latent}. \textsc{Precise} is closest in spirit to fixed-dynamics samplers, but targets a different regime: in RL post-training, stochasticity is the exploration mechanism rather than a nuisance, so the transition must be both few-step accurate and stochastic inside policy optimization.
\par
\endgroup

\section{Preliminaries}

\subsection{Flow matching and reverse-time sampling}

We write a general diffusion or interpolation path as
$\mathbf{z}_t=\alpha_t\mathbf{z}_0+\sigma_t\boldsymbol{\epsilon}$, where
$\boldsymbol{\epsilon}\sim\mathcal{N}(\mathbf{0},\mathbf{I})$, $t\in[0,1]$
indexes the noise level, and larger $t$ corresponds to a noisier state. In the
flow-matching parameterization, $\alpha_t=1-t$ and $\sigma_t=t$:
\begin{equation}
\mathbf{z}_t = (1-t)\mathbf{z}_0 + t\boldsymbol{\epsilon}.
\label{eq:fm_path}
\end{equation}
The target velocity is
$\mathbf{u}_t=\boldsymbol{\epsilon}-\mathbf{z}_0$. A flow-matching model learns
to predict $\mathbf{u}_t$ from
$(\mathbf{z}_t, t)$. At test time, sampling integrates the
probability-flow ODE backward from $t=1$ to $t=0$.

For later use, define the clean-latent posterior mean as
$\hat{\mathbf{z}}_0(t)\triangleq\mathbb{E}[\mathbf{z}_0\mid\mathbf{z}_t]$.
Under the linear path in Eq.~\eqref{eq:fm_path}, the posterior
mean of the noise and the score are
\begin{align}
\hat{\boldsymbol{\epsilon}}(t)
&=
\frac{\mathbf{z}_t - (1-t)\hat{\mathbf{z}}_0(t)}{t},
\label{eq:posterior_mean_eps}
\\
\nabla_{\mathbf{z}}\log p_t(\mathbf{z}_t)
&=
-\frac{\hat{\boldsymbol{\epsilon}}(t)}{t}
=
\frac{(1-t)\hat{\mathbf{z}}_0(t)-\mathbf{z}_t}{t^2}.
\label{eq:score_posterior_mean}
\end{align}
This follows by differentiating the Gaussian conditional density
$p(\mathbf{z}_t\mid \mathbf{z}_0)$ with respect to $\mathbf{z}_t$ and taking
the posterior expectation over $\mathbf{z}_0$.

\subsection{RL post-training for flow matching}

RL post-training requires stochastic rollouts. Flow-GRPO~\citep{liu2025flow} and
Dance-GRPO~\citep{xue2025dancegrpo} therefore replace the deterministic reverse-time
ODE with the reverse-time SDE
\begin{equation}
\mathrm{d}\mathbf{z}_t
=
\Big(
\mathbf{u}_t
- \tfrac{1}{2}\varepsilon_t^2 \nabla_{\mathbf{z}} \log p_t(\mathbf{z}_t)
\Big)\,\mathrm{d}t
+ \varepsilon_t \,\mathrm{d}\mathbf{w}_t,
\label{eq:fm_sde}
\end{equation}
where $\varepsilon_t \ge 0$ controls exploration and
$\mathbf{w}_t$ is Brownian motion. In ideal continuous time with exact scores,
this SDE preserves the flow-matching marginals for any $\varepsilon_t$. In
practice, $\varepsilon_t$ and the finite-step discretization of
Eq.~\eqref{eq:fm_sde} both matter because training uses imperfect predictions
and a small NFE budget.

Given a stochastic sampler, recent work optimizes groups of trajectories
$\{\tau_i\}_{i=1}^{G}$ with a GRPO-style clipped policy objective and
group-normalized rewards~\citep{shao2024deepseekmath}.

\section{Method and Analysis}
\label{sec:analysis}

The stochastic sampler used for RL post-training has two coupled design axes:
the exploration schedule $\varepsilon_t$ and the finite-step transition rule.
The schedule determines how much randomness the policy injects at each noise
level, while the transition rule determines whether the resulting reverse SDE
is faithfully realized at small training NFE. We first derive a schedule that
balances exploration and denoising stability. We then show how existing
samplers deviate from the flow-matching process, and derive \textsc{Precise},
an SDE-consistent closed-form transition under a local approximation that freezes the clean-latent posterior
mean.

\subsection{Balancing exploration and stability}

We start with the exploration schedule $\varepsilon_t$. As shown in
Eq.~\eqref{eq:fm_sde}, $\varepsilon_t$ controls the strength of the injected
noise while also scaling the score term. We choose $\varepsilon_t$ to balance
exploration and reliability: large enough for diversity, yet stable enough to
keep model error controlled along the denoising trajectory.

To formalize this tradeoff, we analyze how much denoising progress is made by
the velocity term $\mathbf{u}_t \mathrm{d}t$ and the score term
$- \tfrac{1}{2}\varepsilon_t^2 \nabla_{\mathbf{z}} \log p_t(\mathbf{z}_t)
\mathrm{d}t$, and how much is canceled by the stochastic term
$\varepsilon_t \,\mathrm{d}\mathbf{w}_t$. We use log signal-to-noise ratio
(logSNR) to measure denoising progress, following prior SNR and
noise-level parameterizations~\citep{kingma2021variational,karras2022elucidating}.
For $\mathbf{z}_t=\alpha_t\mathbf{z}_0+\sigma_t\boldsymbol{\epsilon}$, define
$\mathrm{SNR}(t)\triangleq\alpha_t^2/\sigma_t^2$ and
$\lambda_t\triangleq\log\mathrm{SNR}(t)$; under flow matching,
$(\alpha_t,\sigma_t)=(1-t,t)$, so $\lambda_t=\log((1-t)^2/t^2)$.

Under this progress measure, we can analyze the first-order contribution of
each term to the logSNR change independently. Consider the reverse SDE in
Eq.~\eqref{eq:fm_sde}. Over an infinitesimal reverse step from $t$ to
$t-\Delta t$, there are the velocity term $-\mathbf{u}_t\Delta t$,
score term $\frac{1}{2}\varepsilon_t^2\nabla_{\mathbf{z}}\log p_t(\mathbf{z}_t)\Delta t$,
and stochastic term $\varepsilon_t\sqrt{\Delta t}\,\mathbf{w}$, with
$\mathbf{w}\sim\mathcal{N}(\mathbf{0},\mathbf{I})$.
The first-order logSNR contributions of the three terms are as follows.

\begin{lemma}[First-order logSNR decomposition]
\label{lem:logsnr_decomp}
For the flow-matching reverse SDE, as $\Delta t \to 0$,
\begin{equation*}
\Delta\lambda_{\mathrm{vel}}=\frac{2\Delta t}{t(1-t)}+o(\Delta t),\quad
\Delta\lambda_{\mathrm{sco}}=\frac{\varepsilon_t^2\Delta t}{t^2}+o(\Delta t),\quad
\Delta\lambda_{\mathrm{sto}}=-\frac{\varepsilon_t^2\Delta t}{t^2}+o(\Delta t),
\end{equation*}
where the three quantities denote the first-order contributions of the
velocity term, score term, and stochastic term, respectively.
\end{lemma}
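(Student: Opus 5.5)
The plan is to make ``logSNR contribution'' concrete by tracking how each term changes the coefficients of the state. Write the current state as $\mathbf{z}_t=\alpha\mathbf{z}_0+\sigma\boldsymbol{\epsilon}$ with $(\alpha,\sigma)=(1-t,t)$. Each term is applied to $\mathbf{z}_t$ in isolation. The updated state is rewritten as $\alpha'\mathbf{z}_0+\sigma'_{\mathrm{eff}}\boldsymbol{\epsilon}'$, with $\boldsymbol{\epsilon}'$ standard Gaussian and independent of $\mathbf{z}_0$. The contribution is $\Delta\lambda=\log(\alpha'^2/\sigma_{\mathrm{eff}}'^2)-\lambda_t$, expanded to first order in $\Delta t$. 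Following the paper's sentence before the lemma, the three terms are treated independently, and cross terms are $o(\Delta t)$ and discarded.

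\textbf{Velocity term.} Use the conditional target $\mathbf{u}_t=\boldsymbol{\epsilon}-\mathbf{z}_0$, reading ``first-order contribution'' at the level of the interpolation path. The update $\mathbf{z}_t-\mathbf{u}_t\Delta t$ then has coefficients $(1-t+\Delta t,\;t-\Delta t)$. Hence $\Delta\lambda_{\mathrm{vel}}=\lambda_{t-\Delta t}-\lambda_t=-\lambda_t'\Delta t+o(\Delta t)$. Since $\lambda_t=2\log(1-t)-2\log t$, we have $\lambda_t'=-2/(1-t)-2/t=-2/(t(1-t))$, which gives $2\Delta t/(t(1-t))$.

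\textbf{Score term.} Use Eq.~\eqref{eq:score_posterior_mean} in its conditional form, $\nabla\log p(\mathbf{z}_t\mid\mathbf{z}_0)=-\boldsymbol{\epsilon}/t$; the posterior mean in Eq.~\eqref{eq:score_posterior_mean} is the expectation of this quantity. The term $\tfrac12\varepsilon_t^2\nabla\log p_t\,\Delta t$ then equals $-\tfrac{\varepsilon_t^2\Delta t}{2t}\boldsymbol{\epsilon}$. This leaves $\alpha$ unchanged and shrinks $\sigma$ to $t-\varepsilon_t^2\Delta t/(2t)$. Therefore $\Delta\lambda_{\mathrm{sco}}=-2\log\bigl(1-\varepsilon_t^2\Delta t/(2t^2)\bigr)=\varepsilon_t^2\Delta t/t^2+o(\Delta t)$.

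\textbf{Stochastic term.} Adding $\varepsilon_t\sqrt{\Delta t}\,\mathbf{w}$, with $\mathbf{w}$ independent of $\boldsymbol{\epsilon}$, merges with $\sigma\boldsymbol{\epsilon}$ into a single Gaussian. Its effective noise variance is $t^2+\varepsilon_t^2\Delta t$, and $\alpha$ is unchanged. So $\Delta\lambda_{\mathrm{sto}}=-\log\bigl(1+\varepsilon_t^2\Delta t/t^2\bigr)=-\varepsilon_t^2\Delta t/t^2+o(\Delta t)$.

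I expect the main obstacle to be definitional rather than computational. For the velocity and score terms I must justify using the conditional quantities ($\boldsymbol{\epsilon}-\mathbf{z}_0$ and $-\boldsymbol{\epsilon}/t$) rather than their posterior means, so that the update stays of the form $\alpha'\mathbf{z}_0+\sigma'\boldsymbol{\epsilon}$. I would argue this via the posterior-mean identities of the preliminaries: the marginal quantities are the conditional expectations of these, and the SNR bookkeeping is linear in the drift at first order. The noise term, by contrast, must be handled by adding variances, not coefficients.

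A sanity check confirms the result: the score and stochastic contributions cancel exactly at first order. This matches the fact that the SDE preserves the flow-matching marginals for any $\varepsilon_t$.
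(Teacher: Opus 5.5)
Your proposal is correct and takes a different route from the paper only for the score term. Your velocity and stochastic computations match the paper's: it likewise moves $\mathbf{z}_t$ along the interpolation path, implicitly using the conditional velocity $\boldsymbol{\epsilon}-\mathbf{z}_0$, and merges the noise variances into $t^2+\varepsilon_t^2\Delta t$.

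For the score term, you compute it directly from the conditional score $-\boldsymbol{\epsilon}/t$. This shrinks the $\boldsymbol{\epsilon}$-coefficient to $t-\varepsilon_t^2\Delta t/(2t)$, so the cancellation with the stochastic term is an output of your calculation. The paper never computes the score term's action. It notes that, with time frozen at $t$, the score drift plus its paired diffusion is a Langevin operator whose Fokker--Planck adjoint annihilates $p_t$. It then infers $\Delta\lambda_{\mathrm{sco}}=-\Delta\lambda_{\mathrm{sto}}$ from the resulting first-order invariance of the marginal.

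Your route is explicit, makes the additivity of the three pieces transparent, and agrees with the paper's own point-mass Euler computation. The paper's route is indirect, but it works with the true marginal score for arbitrary data.

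Both routes rest on one identity, and that identity is also the precise form of your ``linearity in the drift'' step. To first order, adding $\Delta t\,\tilde{\mathbf{b}}$ to $\mathbf{z}_t$ changes its density by $-\Delta t\,\nabla\cdot\bigl(p_t\,\mathbb{E}[\tilde{\mathbf{b}}\mid\mathbf{z}_t]\bigr)$. So the conditional score and the true score both give $-\tfrac12\varepsilon_t^2\Delta t\,\nabla\cdot\nabla p_t$, which is exactly the effect of lowering the noise variance from $t^2$ to $t^2-\varepsilon_t^2\Delta t$.

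You should state the substitution in this in-law sense, not per sample. Per sample it fails: for Gaussian data $\mathbf{z}_0\sim\mathcal{N}(\mathbf{0},s^2\mathbf{I})$, the true score term merely rescales $\mathbf{z}_t$, as does the true velocity term. That leaves the per-sample coefficient ratio unchanged. The lemma's contributions are therefore only meaningful under the in-law (equivalently, point-mass) bookkeeping, which the paper also adopts implicitly.
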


Lemma~\ref{lem:logsnr_decomp} shows that the net first-order progress is
produced by the velocity term and is independent of $\varepsilon_t$, while the
intermediate score and noise cancelation scales like $\varepsilon_t^2/t^2$.
The score-term progress is nevertheless not free: before it is canceled by
re-randomization, it asks the model to produce extra denoising progress beyond
the normal ODE velocity progress. If this extra
progress is much larger than the velocity-induced progress, model prediction
errors can be amplified. We therefore control this burden by keeping the ratio
$R(t)\triangleq\Delta\lambda_{\mathrm{sco}}/\Delta\lambda_{\mathrm{vel}}$
constant in time, with its magnitude set by a scalar parameter, giving the schedule
\begin{equation}
\varepsilon_t
=
\eta\sqrt{\frac{t}{1-t}},
\label{eq:eps_schedule}
\end{equation}
where $\eta>0$ sets the magnitude of the extra model-driven progress. Under
this schedule,
$R(t)=(\varepsilon_t^2/t^2)/(2/(t(1-t)))=\eta^2/2$.
Thus $\eta$ directly controls the score-to-velocity progress ratio; in
practice, values around $\sqrt{2}$ work well across tasks, where the
score-induced progress is comparable to the normal velocity progress.

\subsection{Existing samplers deviate from the flow-matching process}

After fixing the exploration schedule, and therefore the denoising SDE, the
remaining design problem is the finite-step sampler. It should follow the
reverse SDE closely enough that reward optimization sees a faithful stochastic
policy with minimal discretization artifact.

\paragraph{Euler sampler introduces excess discretization noise.}
Consider the standard Euler--Maruyama discretization of the reverse SDE used by
Flow-GRPO~\citep{liu2025flow} and Dance-GRPO~\citep{xue2025dancegrpo}. One
step from $t$ to $t' < t$ is
\begin{equation*}
\mathbf{z}_{t'}^{\mathrm{Euler}}
=
\mathbf{z}_t
+\Big(
-\mathbf{u}_t
+ \tfrac{1}{2}\varepsilon_t^2 \nabla_{\mathbf{z}}\log p_t(\mathbf{z}_t)
\Big)\Delta t
+ \varepsilon_t \sqrt{\Delta t}\,\mathbf{w},
\end{equation*}
with $\Delta t = t - t'$. This discretization freezes the velocity and score at
time $t$. That approximation is mostly harmless for the
deterministic ODE, but problematic once new noise is injected: the noise changes
both the velocity and score inside the step, while Euler transition ignores that feedback.

A point-mass toy example, following \citep{wang2025coefficients}, exposes the
mismatch. Assume the data distribution is a point mass at
$\mathbf{z}_\star$. Solving Eq.~\eqref{eq:fm_path} for
$\boldsymbol{\epsilon}$ and plugging the resulting velocity and score into the
Euler step gives
\begin{equation*}
\mathbf{z}_{t'}^{\mathrm{Euler}}
=(1-t')\mathbf{z}_\star
+ \left(t' - \frac{\varepsilon_t^2 \Delta t}{2t}\right)\boldsymbol{\epsilon}
+ \varepsilon_t \sqrt{\Delta t}\,\mathbf{w}.
\end{equation*}
Because $\mathbf{w}$ is independent of $\boldsymbol{\epsilon}$, the noise
coefficient becomes
$\sqrt{(t' - \varepsilon_t^2 \Delta t/(2t))^2+\varepsilon_t^2 \Delta t}$,
which is strictly larger than $t'$ whenever $\varepsilon_t > 0$. Thus Euler
adds excess noise even with a perfect model and the simplest data law.

\paragraph{CPS preserves coefficients while collapsing residual uncertainty.}
CPS~\citep{wang2025coefficients} proposes transition rules of the form
\begin{equation*}
\mathbf{z}_{t'}^{\mathrm{CPS}}
=(1-t')\hat{\mathbf{z}}_0(t)
+ k_1 \hat{\boldsymbol{\epsilon}}(t)
+ k_2 \mathbf{w},
\qquad
k_1^2 + k_2^2 = t'^2.
\end{equation*}
This coefficient-preserving rule is necessary for exactness under the
point-mass model. For non-degenerate data distributions, coefficient
preservation leaves the marginal law underdetermined because posterior means
discard residual uncertainty in $\mathbf{z}_0\mid\mathbf{z}_t$.
Specifically, the deterministic part of the CPS transition is the posterior
mean of a coefficient-preserved random variable. Total covariance then shows
that CPS contracts the covariance whenever that variable still has
posterior uncertainty after conditioning on $\mathbf{z}_t$.
For a simple local case, take $t'\approx t$ with $k_1=0$ and $k_2=t'$.
The transition becomes
$\mathbf{z}_{t'}^{\mathrm{CPS}}=(1-t')\mathbb{E}[\mathbf{z}_0\mid \mathbf{z}_t]+t'\mathbf{w}$,
whereas the corresponding flow-matching marginal would use
$(1-t')\mathbf{z}_0+t'\boldsymbol{\epsilon}$. The covariance contraction,
defined as the target covariance minus the CPS covariance, is exactly
$(1-t')^2\mathbb{E}_{\mathbf{z}_t}[\operatorname{Cov}(\mathbf{z}_0\mid\mathbf{z}_t)]$,
so any remaining ambiguity about the clean latent is removed by the posterior
mean. Appendix~\ref{app:cps_variance} gives the general derivation and a
double-ring example where this contraction appears as a persistent marginal
bias.

\subsection{\textsc{Precise}: an SDE-consistent finite-step transition}

The finite-step transition should preserve reverse-SDE dynamics while
remaining usable at small NFE budgets required by RL. The previous two
cases isolate two constraints: Euler freezing ignores the feedback from
freshly injected noise, while coefficient preservation alone can bias the sample distribution. We therefore retain the SDE but seek a local
anchor robust to injected noise.

The natural choice is the clean-latent posterior mean given the current state,
$\hat{\mathbf{z}}_0(t)
=\mathbb{E}[\mathbf{z}_0\mid\mathbf{z}_t]$. 
Since estimating this posterior mean is naturally a denoising task, this anchor
generally moves much more slowly than the noisy latent under isotropic
perturbations. Unlike CPS, we use it only to
anchor the linearized reverse SDE; the residual stochastic
variance is still set by the SDE transition. 

Using Eqs.~\eqref{eq:posterior_mean_eps} and
\eqref{eq:score_posterior_mean}, the reverse SDE can be rewritten as
\begin{equation}
\mathrm{d}\mathbf{z}_t
=
\left(
\frac{\mathbf{z}_t-\hat{\mathbf{z}}_0(t)}{t}
+ \frac{\varepsilon_t^2}{2t^2}
\bigl(
\mathbf{z}_t-(1-t)\hat{\mathbf{z}}_0(t)
\bigr)
\right)\mathrm{d}t
+ \varepsilon_t\,\mathrm{d}\mathbf{w}_t.
\label{eq:fm_sde_mean_form}
\end{equation}

\begin{assumption}[Frozen posterior mean on one step]
\label{ass:frozen_mean}
For a reverse step from $t$ to $t' < t$, we treat the clean-latent posterior
mean along the step as approximately constant, i.e.,
$\hat{\mathbf{z}}_0(s)\approx\hat{\mathbf{z}}_0(t)$ for all $s\in[t',t]$.
\end{assumption}

Assumption~\ref{ass:frozen_mean} is exact for point-mass data and serves as a
local approximation over one step. Under this assumption, the SDE becomes
linear and admits the following closed-form transition.

\begin{theorem}[Exact transition under frozen posterior mean]
\label{cons:exact_update}
Assume Assumption~\ref{ass:frozen_mean}. Then the exact transition from
$\mathbf{z}_t$ to $\mathbf{z}_{t'}$ is
\begin{equation*}
\mathbf{z}_{t'}
=(1-t')\hat{\mathbf{z}}_0(t)
+ \frac{t'}{t}e^{-A(t',t)/2}
\bigl(
\mathbf{z}_t-(1-t)\hat{\mathbf{z}}_0(t)
\bigr)
+ t'\sqrt{1-e^{-A(t',t)}}\,\mathbf{w},
\end{equation*}
where
$A(t',t)\triangleq \int_{t'}^t \varepsilon_s^2/s^2\,\mathrm{d}s$ and
$\mathbf{w}\sim\mathcal{N}(\mathbf{0}, \mathbf{I})$ is independent of
$\mathbf{z}_t$.
\end{theorem}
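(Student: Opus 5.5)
Under Assumption~\ref{ass:frozen_mean} we fix $\mathbf{m}\triangleq\hat{\mathbf{z}}_0(t)$ as a constant on $[t',t]$, so Eq.~\eqref{eq:fm_sde_mean_form} becomes a linear SDE with additive noise and deterministic coefficients. We then solve it exactly with an integrating factor. The first step is a change of variables that removes the inhomogeneous term. We introduce the residual $\mathbf{y}_s\triangleq\mathbf{z}_s-(1-s)\mathbf{m}$, which is $s$ times the frozen noise estimate in Eq.~\eqref{eq:posterior_mean_eps}. Since $\mathbf{z}_s-\mathbf{m}=\mathbf{y}_s-s\mathbf{m}$, the velocity part of the drift is $\mathbf{y}_s/s-\mathbf{m}$. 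Moreover, $\mathrm{d}\mathbf{y}_s=\mathrm{d}\mathbf{z}_s+\mathbf{m}\,\mathrm{d}s$. The $\mathbf{m}$ terms cancel, and we expect
\begin{equation*}
\mathrm{d}\mathbf{y}_s=\Bigl(\frac{1}{s}+\frac{\varepsilon_s^2}{2s^2}\Bigr)\mathbf{y}_s\,\mathrm{d}s+\varepsilon_s\,\mathrm{d}\mathbf{w}_s ,
\end{equation*}
which is a scalar-coefficient Ornstein--Uhlenbeck-type equation acting coordinatewise.

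Next we solve this equation as a process run backward from $s=t$ to $s=t'$. In the reverse-time convention of Eq.~\eqref{eq:fm_sde}, $\mathrm{d}s<0$ and the Brownian increments have variance $\varepsilon_s^2|\mathrm{d}s|$ and are independent of $\mathbf{z}_t$. The propagator from $s$ down to $t'$ is
\begin{equation*}
\Phi(t',s)=\exp\Bigl(-\int_{t'}^{s}\Bigl(\frac{1}{r}+\frac{\varepsilon_r^2}{2r^2}\Bigr)\mathrm{d}r\Bigr)=\frac{t'}{s}\,e^{-A(t',s)/2}.
\end{equation*}
The deterministic part of the solution is therefore $\mathbf{y}_{t'}=\Phi(t',t)\,\mathbf{y}_t$. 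The stochastic part is a Wiener integral of the deterministic integrand $\Phi(t',s)\varepsilon_s$ against the backward increments. Such an integral is a centered Gaussian independent of $\mathbf{z}_t$, and by the Itô isometry its covariance is isotropic with variance
\begin{equation*}
\int_{t'}^{t}\varepsilon_s^2\,\frac{t'^2}{s^2}\,e^{-A(t',s)}\,\mathrm{d}s .
\end{equation*}
Because $\partial_s A(t',s)=\varepsilon_s^2/s^2$, the integrand equals $t'^2\,\partial_s\bigl(-e^{-A(t',s)}\bigr)$. The integral thus telescopes to $t'^2\bigl(1-e^{-A(t',t)}\bigr)$, using $A(t',t')=0$. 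We then write the noise as $t'\sqrt{1-e^{-A(t',t)}}\,\mathbf{w}$ with $\mathbf{w}\sim\mathcal{N}(\mathbf{0},\mathbf{I})$.

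Finally, we undo the substitution: $\mathbf{z}_{t'}=(1-t')\mathbf{m}+\mathbf{y}_{t'}$ with $\mathbf{y}_t=\mathbf{z}_t-(1-t)\mathbf{m}$, which yields exactly the stated formula. As a sanity check, we will verify that $\varepsilon\equiv0$ recovers the DDIM-like deterministic update. We will also check the point-mass case: there $\mathbf{y}_t=t\boldsymbol{\epsilon}$, and the total noise coefficient is $\sqrt{t'^2e^{-A}+t'^2(1-e^{-A})}=t'$, so the marginal is preserved.

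The main obstacle is the careful handling of reverse time. We must ensure that the sign of the drift and the direction of integration are consistent, since we integrate from $t$ down to $t'$. We must also ensure that the Itô isometry is applied to increments that are independent of the starting state $\mathbf{z}_t$. We would handle this by reparametrizing with $\tau=t-s$, so that the SDE runs forward in $\tau$ with a standard Brownian motion, and then applying the usual variation-of-constants formula for linear SDEs. A minor additional point is finiteness of $A(t',t)$. This holds for $t'>0$ under Eq.~\eqref{eq:eps_schedule}, since then $\varepsilon_s^2/s^2=\eta^2/(s(1-s))$, which is integrable on $[t',t]\subset(0,1)$.
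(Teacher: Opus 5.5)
Your proof is correct and follows essentially the same route as the paper. You freeze $\mathbf{m}$, pass to the residual, solve the resulting linear SDE in descending time $\tau=t-s$ by variation of constants, and apply the It\^o isometry, whose integrand telescopes to $1-e^{-A(t',t)}$. The only cosmetic difference is that the paper normalizes the residual as $\mathbf{r}_\tau=(\mathbf{z}_{s}-(1-s)\mathbf{m})/s$, which makes it a pure Ornstein--Uhlenbeck process with drift $-\tfrac{\varepsilon_s^2}{2s^2}\mathbf{r}_\tau$, whereas you keep the unnormalized residual and carry the factor $t'/s$ inside your propagator $\Phi(t',s)$.
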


The proof is given in Appendix~\ref{app:proof_exact_update}. In practice, we
replace the exact $\hat{\mathbf{z}}_0(t)$ by the model clean-latent prediction
and evaluate $A(t',t)$ analytically or numerically, depending on
$\varepsilon_t$.

For the exploration schedule in Eq.~\eqref{eq:eps_schedule},
$A(t',t)=\eta^2\int_{t'}^t 1/(s(1-s))\,\mathrm{d}s
=\eta^2\log(t(1-t')/(t'(1-t)))$, so Eq.~\eqref{eq:posterior_mean_eps} turns
Theorem~\ref{cons:exact_update} into the explicit transition rule
\begin{equation*}
\mathbf{z}_{t'}
=(1-t')\hat{\mathbf{z}}_0(t)
+ t'\rho(t',t)\hat{\boldsymbol{\epsilon}}(t)
+ t'\sqrt{1-\rho(t',t)^2}\,\mathbf{w},
\end{equation*}
with
\begin{equation}
\rho(t',t)
\triangleq
\left(
\frac{t'(1-t)}{t(1-t')}
\right)^{\eta^2/2},
\label{eq:precise_closed_form}
\end{equation}
obtaining the final transition rule used by \textsc{Precise}.

\subsection{Discussion}

\ifx\selectedpapertemplate\neuripstwentysixtemplate
\begin{wrapfigure}{r}{0.44\linewidth}
  \vspace{-10pt}
  \centering
  \includegraphics[width=\linewidth]{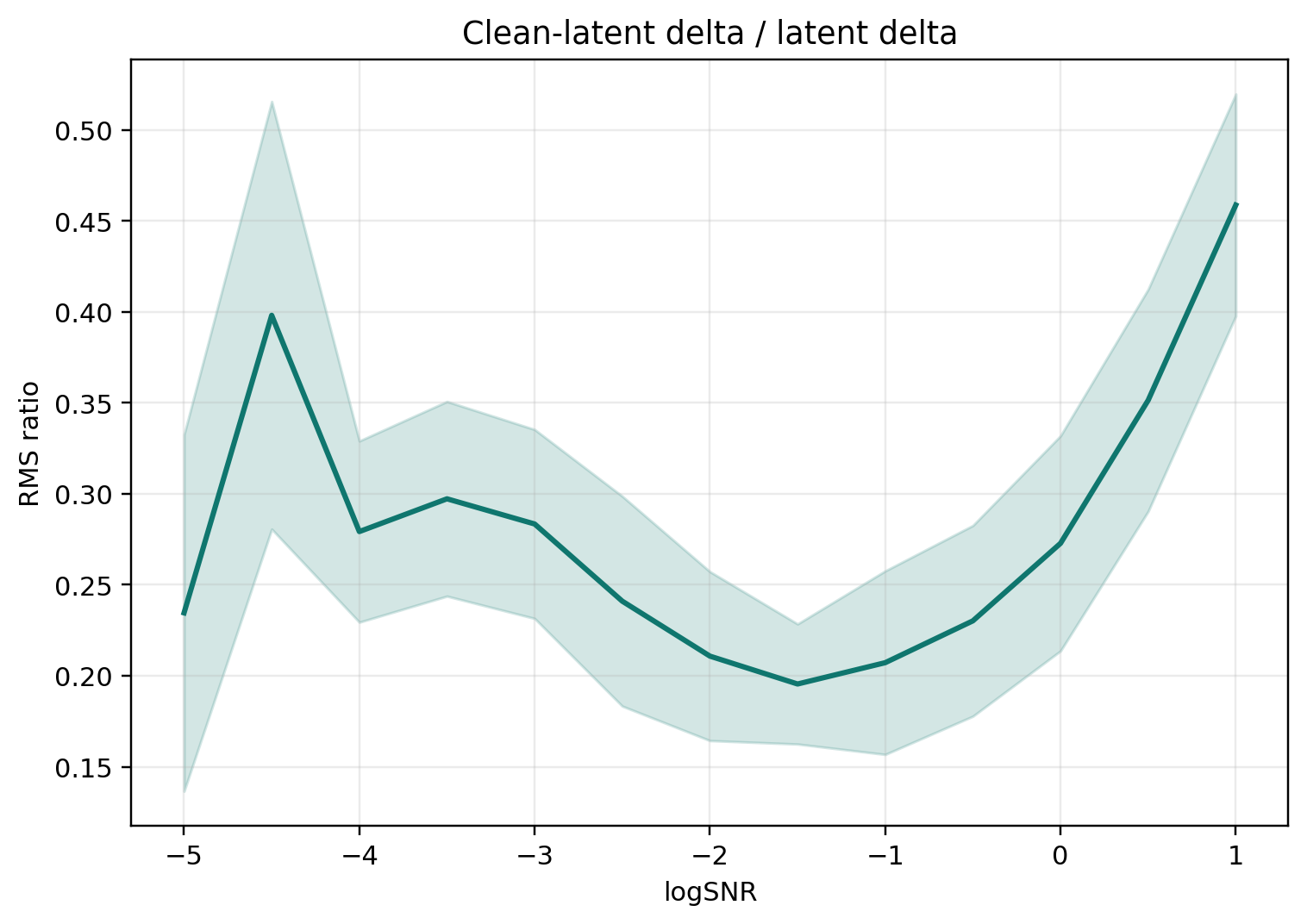}
  \caption{SD3.5-M clean-latent stability under forward-style renoising on adjacent logSNR-grid pairs; the band shows one standard deviation over prompts, seeds, and noise repeats.}
  \label{fig:ratio_vs_logsnr}
  \vspace{-12pt}
\end{wrapfigure}
\else
\begin{figure}
  \centering
  \includegraphics[width=0.44\linewidth]{figures/ratio_vs_logsnr.png}
  \caption{SD3.5-M clean-latent stability under forward-style renoising on adjacent logSNR-grid pairs; the band shows one standard deviation over prompts, seeds, and noise repeats.}
  \label{fig:ratio_vs_logsnr}
\end{figure}
\fi

The main approximation in \textsc{Precise} is local: during one finite reverse
step, we treat the clean-latent posterior mean as nearly fixed. Intuitively, once the
denoiser has low posterior uncertainty about the clean latent, fresh noise can
move the noisy state substantially while moving the model clean-latent
prediction only mildly.
Appendix~\ref{app:proof_frozen_mean_local} formalizes this intuition. The
posterior covariance controls the Jacobian of the posterior mean
(Lemma~\ref{lem:mean_jacobian}), and the one-step transition error between the
true reverse SDE and the frozen-posterior-mean SDE remains small when this mean
is locally Lipschitz in state and time
(Proposition~\ref{prop:frozen_mean_error}).

We verify the stability empirically on SD3.5-M~\citep{esser2024scaling}. To
plot Fig.~\ref{fig:ratio_vs_logsnr}, we replay deterministic denoising
trajectories on a schedule whose anchors form a uniform logSNR grid over
$[-5,1.5]$. For each adjacent pair, we take the higher-logSNR latent
$\mathbf{z}_{t'}$ from the trajectory and construct a lower-logSNR renoised
latent $\widetilde{\mathbf{z}}_t=(1-t)/(1-t')\, \mathbf{z}_{t'}
+
\sqrt{t^2-\left((1-t)t'/(1-t')\right)^2}\,\boldsymbol{\xi}$
by scaling and adding fresh Gaussian noise. We then measure the RMS ratio
$\|\hat{\mathbf{z}}_0(\widetilde{\mathbf{z}}_t,t)-\hat{\mathbf{z}}_0(\mathbf{z}_{t'},t')\|/
\|\widetilde{\mathbf{z}}_t-\mathbf{z}_{t'}\|$.
Fig.~\ref{fig:ratio_vs_logsnr} shows that this ratio stays below $1$ across the
measured range. This supports the approximation as a local model for the reverse SDE.

\section{Experiments}
\label{sec:experiments}

We evaluate the sampler induced by Theorem~\ref{cons:exact_update}, abbreviated as \textsc{Precise}, against Dance-GRPO~\citep{xue2025dancegrpo}, Flow-GRPO~\citep{liu2025flow}, and CPS~\citep{wang2025coefficients} for GRPO-based reward optimization of flow-matching models. 

\subsection{Experimental setup}

\paragraph{Rewards.} For rule-based optimization, we use GenEval \citep{ghosh2023geneval}. For model-based rewards, we train with PickScore~\citep{kirstain2023pick}, CLIPScore~\citep{hessel2021clipscore}, and HPSv2.1~\citep{wu2023human}, and report ImageReward~\citep{xu2023imagereward}, UnifiedReward~\citep{wang2025unified}, and Aesthetics~\citep{laion2022aestheticpredictor} at evaluation. These metrics cover complementary axes of reward alignment, text-image matching, and perceptual quality.

\paragraph{Prompt datasets.} GenEval prompts are synthetic object-centric prompts from task templates over COCO categories~\citep{ghosh2023geneval}. PickScore, CLIPScore, and HPSv2.1 prompts come from Pick-a-Pic training captions~\citep{kirstain2023pick}, filtered to unique captions with at least six whitespace-separated words and split by holding out 2{,}048 shuffled captions for evaluation. In all cases, we use the prompt splits distributed with Flow-GRPO codebase~\citep{liu2025flow}.

\paragraph{Training and evaluation protocol.} We base our experiments on the Flow-GRPO codebase~\citep{liu2025flow}. Our main backbone is Stable Diffusion 3.5 Medium (SD3.5-M) \citep{esser2024scaling}, a pretrained text-to-image flow-matching model, at $512\times512$ resolution. We fine-tune with LoRA ($r=32$, $\alpha=64$) and use GRPO-Guard~\citep{wang2025grpo}. We use no classifier-free guidance during training or evaluation and no KL loss during training. We use the same NFE for training and evaluation. Unless stated otherwise, results use the final checkpoint and each method's default scalar exploration parameter $\eta$: $0.3$ for Dance-GRPO~\citep{xue2025dancegrpo}, $0.7$ for Flow-GRPO~\citep{liu2025flow}, $0.7$ for CPS~\citep{wang2025coefficients}, and $1.5$ for \textsc{Precise}. We keep the rest of the Flow-GRPO recipe and dataset splits fixed, changing only the sampler and exploration setting. All runs use 8$\times$ NVIDIA H20 GPUs, taking about 80GB GPU memory per H20.

\subsection{Main results}
\label{sec:sd35m_results}

We train all four samplers on SD3.5-M under two protocols: 10-NFE/3000-iteration and 30-NFE/1000-iteration, both taking about 3 days. We report final-checkpoint training rewards and held-out metrics,\footnote{For Dance-GRPO in the 30-NFE protocol, we report iteration 600, the highest checkpoint before training becomes unstable.} using mean and standard deviation over three evaluation seeds where available, and plot training curves for the three in-domain rewards. We also include DiffusionNFT~\citep{zheng2025diffusionnft} as a reference, using their default 25-NFE training setting, training for 3 days, and evaluating at 30 NFE.\footnote{We tried to accelerate DiffusionNFT with 10-NFE training, but it is less stable and diverges after two days. The best pre-divergence checkpoint is worse than the default setting.}

\newcommand{\resultpm}[2]{#1{\scriptsize$\pm$#2}}

\begin{table}[htb]
  \centering
  \scriptsize
  \setlength{\tabcolsep}{2pt}
  \caption{Main SD3.5-M results under the 10-NFE/3000-iteration and 30-NFE/1000-iteration protocols. Multi-seed entries are mean$\pm$standard deviation over three evaluation seeds. Higher is better for every metric.}
  \label{tab:main_results_sd35m}
  \begin{tabular}{@{}lcccccc@{}}
    \toprule
    & \multicolumn{3}{c}{Training rewards} & \multicolumn{3}{c}{Held-out metrics} \\
    \cmidrule(lr){2-4}\cmidrule(l){5-7}
    Method & PickScore $\uparrow$ & CLIPScore $\uparrow$ & HPSv2.1 $\uparrow$ & ImageReward $\uparrow$ & UnifiedReward v2 $\uparrow$ & Aesthetics $\uparrow$ \\
    \midrule
    \multicolumn{7}{@{}l}{\textbf{10-NFE protocol}} \\
    Dance-GRPO~\citep{xue2025dancegrpo} & \resultpm{22.428}{0.007} & \resultpm{0.942}{0.000} & \resultpm{0.340}{0.000} & \resultpm{1.350}{0.010} & \resultpm{0.647}{0.000} & \resultpm{6.620}{0.007} \\
    Flow-GRPO~\citep{liu2025flow} & \resultpm{23.242}{0.003} & \resultpm{1.018}{0.001} & \resultpm{0.377}{0.000} & \textbf{\resultpm{1.627}{0.004}} & \underline{\resultpm{0.652}{0.000}} & \resultpm{6.530}{0.003} \\
    CPS~\citep{wang2025coefficients} & \underline{\resultpm{23.670}{0.002}} & \underline{\resultpm{1.026}{0.001}} & \underline{\resultpm{0.389}{0.000}} & \resultpm{1.607}{0.005} & \resultpm{0.650}{0.000} & \textbf{\resultpm{6.667}{0.003}} \\
    \textsc{Precise} & \textbf{\resultpm{23.745}{0.004}} & \textbf{\resultpm{1.038}{0.001}} & \textbf{\resultpm{0.391}{0.000}} & \underline{\resultpm{1.615}{0.004}} & \textbf{\resultpm{0.654}{0.000}} & \underline{\resultpm{6.652}{0.003}} \\
    \midrule
    \multicolumn{7}{@{}l}{\textbf{30-NFE protocol}} \\
    Dance-GRPO~\citep{xue2025dancegrpo} & \resultpm{22.237}{0.004} & \resultpm{0.868}{0.001} & \resultpm{0.330}{0.000} & \resultpm{1.082}{0.008} & \resultpm{0.652}{0.001} & \resultpm{6.568}{0.005} \\
    Flow-GRPO~\citep{liu2025flow} & \resultpm{23.227}{0.006} & \resultpm{0.967}{0.001} & \resultpm{0.374}{0.000} & \resultpm{1.528}{0.002} & \underline{\resultpm{0.656}{0.000}} & \underline{\resultpm{6.736}{0.004}} \\
    CPS~\citep{wang2025coefficients} & \underline{\resultpm{23.288}{0.007}} & \underline{\resultpm{0.994}{0.001}} & \textbf{\resultpm{0.379}{0.000}} & \underline{\resultpm{1.542}{0.001}} & \resultpm{0.652}{0.000} & \resultpm{6.693}{0.002} \\
    \textsc{Precise} & \textbf{\resultpm{23.421}{0.006}} & \textbf{\resultpm{1.002}{0.001}} & \textbf{\resultpm{0.379}{0.000}} & \textbf{\resultpm{1.548}{0.004}} & \textbf{\resultpm{0.658}{0.000}} & \textbf{\resultpm{6.766}{0.003}} \\
    \midrule
    \textcolor{black!50}{DiffusionNFT~\citep{zheng2025diffusionnft}} & \textcolor{black!50}{23.349} & \textcolor{black!50}{1.010} & \textcolor{black!50}{0.363} & \textcolor{black!50}{1.573} & \textcolor{black!50}{0.660} & \textcolor{black!50}{6.494} \\
    \bottomrule
  \end{tabular}
\end{table}

\begin{figure}[htb]
  \condvspace{-0.6em}
  \centering
  \includegraphics[width=0.92\linewidth]{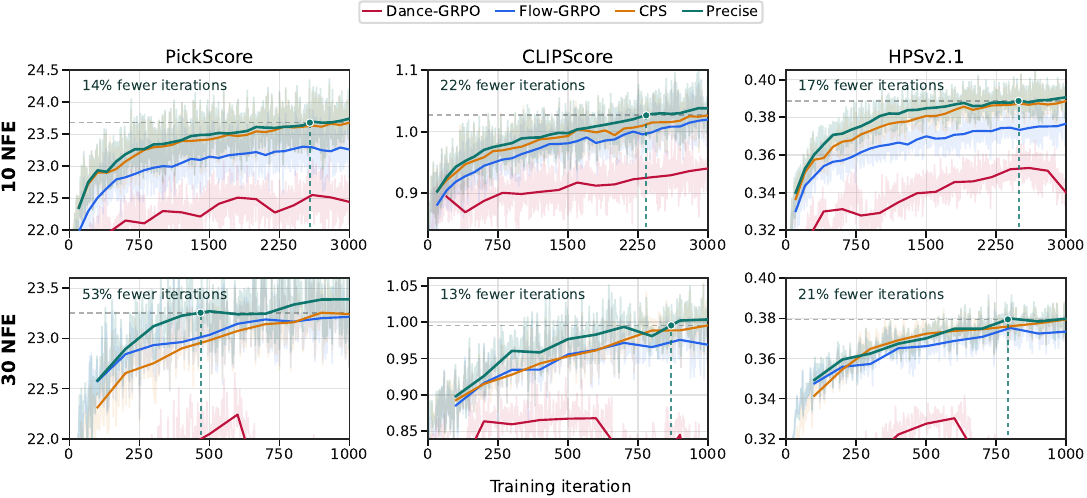}
  \condvspace{-0.8em}
  \caption{Training trajectories under the two main protocols. Higher is better.}
  \label{fig:main_training_curves}
\end{figure}

As shown in Table~\ref{tab:main_results_sd35m} and Figure~\ref{fig:main_training_curves}, \textsc{Precise} ranks first on all three training rewards and remains competitive on held-out metrics, and the same ordering holds through most of optimization. Using the best value among the prior samplers on each metric as the target, \textsc{Precise} reaches the PickScore, CLIPScore, and HPSv2.1 targets with 14.2\%--22.1\% fewer iterations in the 10-NFE protocol, and 13.1\%--53.2\% fewer iterations in the 30-NFE protocol, directly translating to wall-clock savings since sampler overhead is negligible.

The gaps among stochastic samplers highlight both sampler-design components: exploration-stability balance and SDE consistency. Dance-GRPO uses a constant noise schedule $\varepsilon_t=\eta$, too conservative in early denoising and too aggressive in late steps, according to our analysis in Section~\ref{sec:analysis}. It yields little layout variation and corrupts fine details, causing training collapse. A similar, smaller gap exists between \textsc{Precise} and CPS. In contrast, the gap between \textsc{Precise} and Flow-GRPO mainly stems from removing excess discretization noise, which boosts sample quality and reward scores.

Since Dance-GRPO underperforms, we exclude it from later experiments for efficiency.

\subsection{Task-level validation}

As a task-level check, we isolate one rule-based reward and one model-based reward in the 10-NFE/3000-iteration protocol. Table~\ref{tab:head_to_head} shows that \textsc{Precise} is best on both GenEval and PickScore.

\begin{table}[htb]
  \centering
  \small
  \caption{Head-to-head results under the 10-NFE protocol. Higher is better.}
  \label{tab:head_to_head}
  \begin{tabular}{@{}lcc@{}}
    \toprule
    Method & GenEval $\uparrow$ & PickScore $\uparrow$ \\
    \midrule
    Flow-GRPO~\citep{liu2025flow} & 0.969 & 24.283 \\
    CPS~\citep{wang2025coefficients} & \underline{0.972} & \underline{24.594} \\
    \textsc{Precise} & \textbf{0.980} & \textbf{24.668} \\
    \bottomrule
  \end{tabular}
\end{table}

\subsection{Results on FLUX.2}
\label{sec:flux2_klein_results}

We further evaluate on FLUX.2 Klein 4B Base~\citep{blackforestlabs2025flux2,blackforestlabs2026flux2klein} at $512\times512$ resolution. We fine-tune with LoRA ($r=16$, $\alpha=16$) under a 20-NFE/1000-iteration protocol, and train for about 1 day on 8$\times$ NVIDIA H20 GPUs. Table~\ref{tab:flux20_results} and Figure~\ref{fig:flux20_training_curves} report final rewards and training curves.

\begin{table}[htb]
  \centering
  \scriptsize
  \setlength{\tabcolsep}{2pt}
  \caption{FLUX.2 Klein 20-NFE results at 1000 iterations. Entries are mean$\pm$standard deviation over three evaluation seeds. Higher is better for every metric.}
  \label{tab:flux20_results}
  \begin{tabular}{@{}lcccccc@{}}
    \toprule
    & \multicolumn{3}{c}{Training rewards} & \multicolumn{3}{c}{Held-out metrics} \\
    \cmidrule(lr){2-4}\cmidrule(l){5-7}
    Method & PickScore $\uparrow$ & CLIPScore $\uparrow$ & HPSv2.1 $\uparrow$ & ImageReward $\uparrow$ & UnifiedReward v2 $\uparrow$ & Aesthetics $\uparrow$ \\
    \midrule
    Flow-GRPO~\citep{liu2025flow} & \underline{\resultpm{22.591}{0.006}} & \underline{\resultpm{0.905}{0.001}} & \resultpm{0.355}{0.000} & \resultpm{1.266}{0.001} & \textbf{\resultpm{0.654}{0.000}} & \underline{\resultpm{6.676}{0.005}} \\
    CPS~\citep{wang2025coefficients} & \resultpm{22.486}{0.005} & \resultpm{0.898}{0.001} & \underline{\resultpm{0.362}{0.000}} & \underline{\resultpm{1.334}{0.010}} & \resultpm{0.649}{0.001} & \resultpm{6.513}{0.000} \\
    \textsc{Precise} & \textbf{\resultpm{22.791}{0.010}} & \textbf{\resultpm{0.930}{0.001}} & \textbf{\resultpm{0.366}{0.000}} & \textbf{\resultpm{1.410}{0.006}} & \underline{\resultpm{0.653}{0.000}} & \textbf{\resultpm{6.677}{0.007}} \\
    \bottomrule
  \end{tabular}
\end{table}

\begin{figure}[htb]
  \centering
  \includegraphics[width=0.92\linewidth]{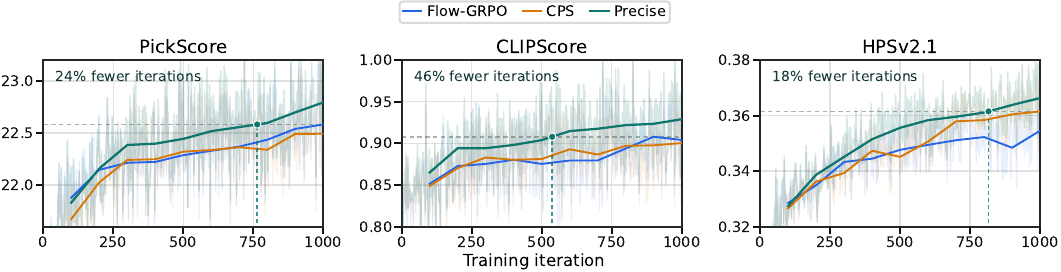}
  \condvspace{-0.7em}
  \caption{FLUX.2 Klein training trajectories under the 20-NFE protocol. Higher is better.}
  \label{fig:flux20_training_curves}
\end{figure}

Table~\ref{tab:flux20_results} shows that gains transfer from SD3.5-M to the newer FLUX.2 backbone. At the final checkpoint, \textsc{Precise} ranks first on five of six metrics and remains within $0.001$ of the best sampler on UnifiedReward v2. Figure~\ref{fig:flux20_training_curves} shows the same trend: \textsc{Precise} rises faster and finishes higher on the three training rewards. Using the best value among the prior samplers as the target, \textsc{Precise} reaches the PickScore, CLIPScore, and HPSv2.1 targets with 18.4\%--46.3\% fewer iterations.

\subsection{Ablations}

\paragraph{Exploration schedule and discretization.}
We isolate the two sampler-design components with a progression from
Dance-GRPO~\citep{xue2025dancegrpo} to \textsc{Precise}. First, we replace the
Euler-style transition in Dance-GRPO with our SDE-consistent finite-step
transition while keeping the constant exploration schedule. 
Second, we replace the constant schedule with the
logSNR-derived schedule in Eq.~\eqref{eq:eps_schedule}, giving
\textsc{Precise}. We set $\eta=0.7$ for the intermediate sampler, about a
$2\times$ increase over default Dance-GRPO, enabled
by the improved stability from our discretization.

\begin{table}[H]
  \centering
  \scriptsize
  \setlength{\tabcolsep}{4pt}
  \caption{Ablation on exploration schedule and discretization under the SD3.5-M 10-NFE protocol at 3000 iterations. The first row reports Dance-GRPO raw scores; later rows report absolute changes over the previous row. Higher is better for every metric.}
  \label{tab:schedule_discretization_ablation}
  \begin{tabular}{@{}lcccccc@{}}
    \toprule
    & \multicolumn{3}{c}{Training rewards} & \multicolumn{3}{c}{Held-out metrics} \\
    \cmidrule(lr){2-4}\cmidrule(l){5-7}
    Ablation & PickScore $\uparrow$ & CLIPScore $\uparrow$ & HPSv2.1 $\uparrow$ & ImageReward $\uparrow$ & UnifiedReward v2 $\uparrow$ & Aesthetics $\uparrow$ \\
    \midrule
    Dance-GRPO~\citep{xue2025dancegrpo} & 22.428 & 0.942 & 0.340 & 1.350 & 0.647 & 6.620 \\
    \quad + SDE-consistent discretization & \textcolor{red}{+0.826} & \textcolor{red}{+0.043} & \textcolor{red}{+0.037} & \textcolor{red}{+0.151} & \textcolor{red}{+0.003} & \textcolor{red}{+0.008} \\
    \qquad + logSNR exploration schedule & \textcolor{red}{+0.491} & \textcolor{red}{+0.053} & \textcolor{red}{+0.014} & \textcolor{red}{+0.114} & \textcolor{red}{+0.004} & \textcolor{red}{+0.024} \\
    \bottomrule
  \end{tabular}
\end{table}

Table~\ref{tab:schedule_discretization_ablation} shows that replacing Dance-GRPO's
Euler transition with the SDE-consistent finite-step transition improves all
reported metrics, and replacing the constant schedule with the logSNR-derived
schedule gives another consistent gain. Thus, both the finite-step transition
and exploration schedule contribute to reward optimization.

\paragraph{NFE and exploration strength.}
We additionally test whether \textsc{Precise} is robust to NFE and
exploration strength.

\begin{figure}[htb]
  \centering
  \begin{minipage}[t]{0.42\textwidth}
    \centering
    \includegraphics[height=1.38in]{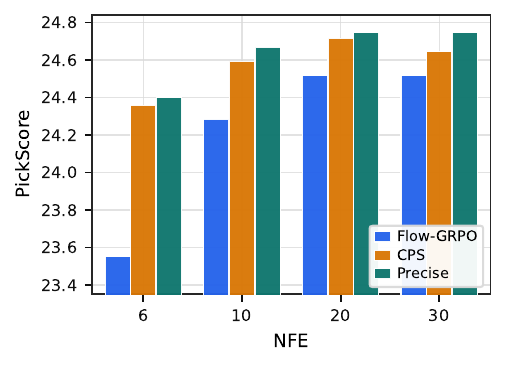}
  \end{minipage}\hfill
  \begin{minipage}[t]{0.56\textwidth}
    \centering
    \includegraphics[height=1.38in]{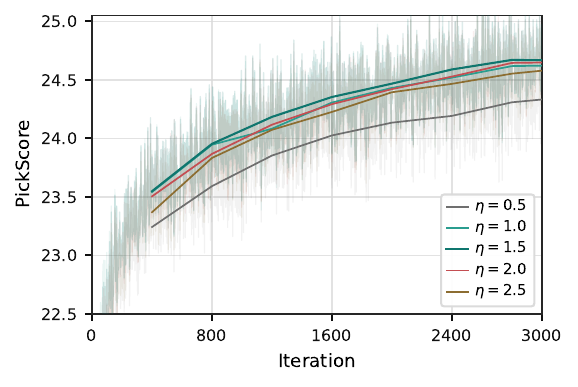}
  \end{minipage}
  \caption{Ablations on NFE and exploration strength. Left: NFE ablation on PickScore. Right: $\eta$ ablation on PickScore. Higher is better.}
  \label{fig:step_eta_ablations}
\end{figure}

The NFE ablation trains each sampler for $3000$ iterations at
$N\in\{6,10,20,30\}$ on PickScore. In the left panel of
Figure~\ref{fig:step_eta_ablations}, \textsc{Precise} leads at every tested NFE,
with the clearest margin in the aggressive 6-NFE regime. Flow-GRPO~\citep{liu2025flow}
degrades sharply at low NFE because of excess noise.

To test robustness to the exploration strength $\eta$, we sweep
$\eta\in\{0.5,1.0,1.5,2.0,2.5\}$ for \textsc{Precise} in the 10-NFE PickScore
setting. As shown in the right panel of
Figure~\ref{fig:step_eta_ablations}, performance is stable across
$\eta\in\{1.0,1.5,2.0\}$ and degrades gracefully at the more extreme values, due to lack of exploration or denoising instability. Thus, \textsc{Precise} is robust across a practical
range of NFE budgets and exploration strengths.

\section{Additional Qualitative Comparisons}
\label{app:qualitative_comparisons}

Figures~\ref{fig:qualitative_pickscore} and~\ref{fig:qualitative_pickscore_30}
qualitatively compare images sampled by Flow-GRPO, CPS, and \textsc{Precise}
under the two SD3.5-M protocols in Section~\ref{sec:sd35m_results}. In the
10-NFE setting,
\textsc{Precise} more reliably preserves global composition, local texture, and
prompt-specific attributes than Flow-GRPO and CPS,
while Flow-GRPO samples still exhibit noise artifacts after training. Under 30-NFE
training and evaluation, similar qualitative differences remain visible on identity,
attribute binding, and scene structure.

\begin{figure*}[htb]
  \centering
  \includegraphics[width=\textwidth]{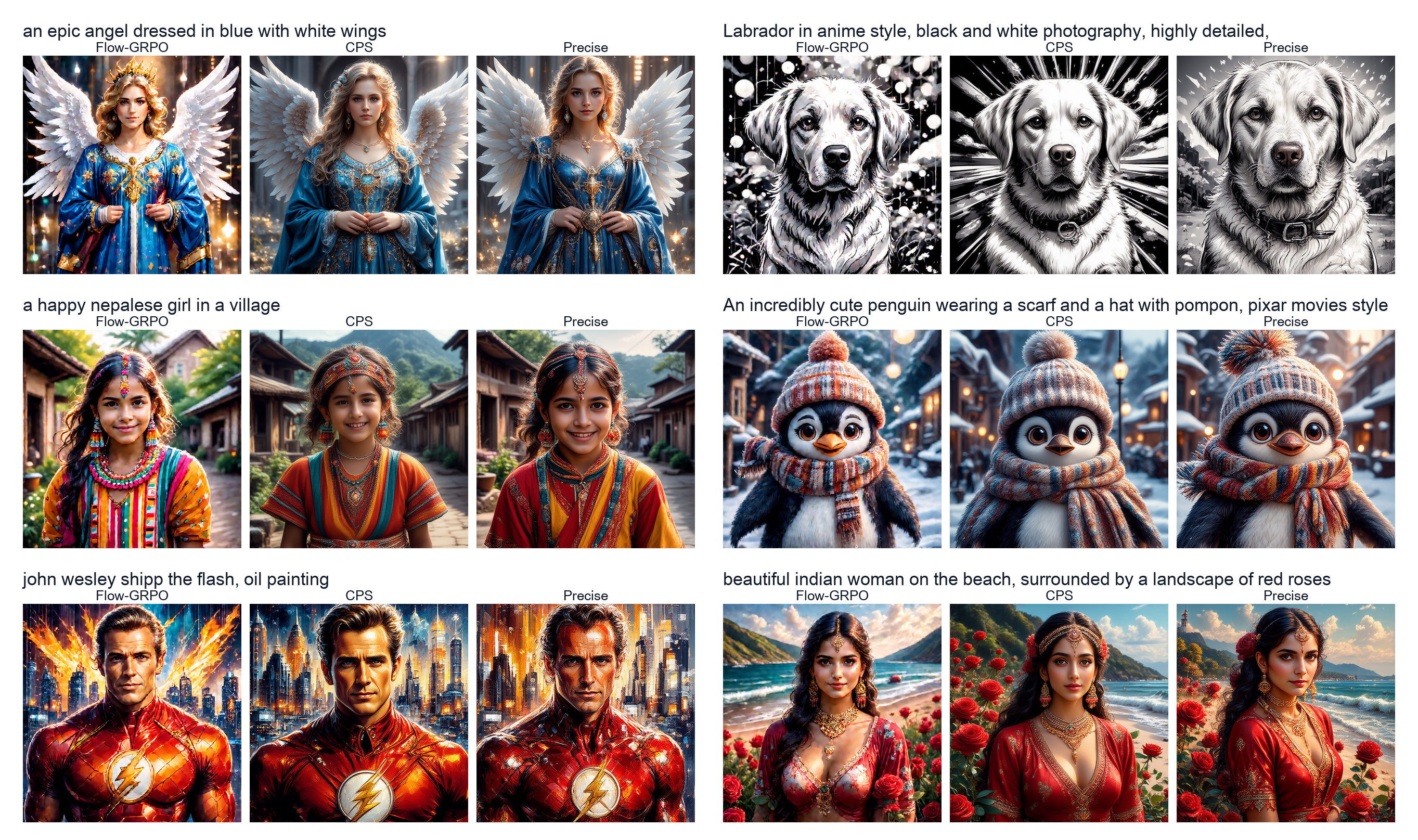}
  \caption{Qualitative examples under the 10-NFE protocol. \textsc{Precise} is the most consistent on global structure and attribute binding.}
  \label{fig:qualitative_pickscore}
\end{figure*}

\begin{figure*}[htb]
  \centering
  \includegraphics[width=\textwidth]{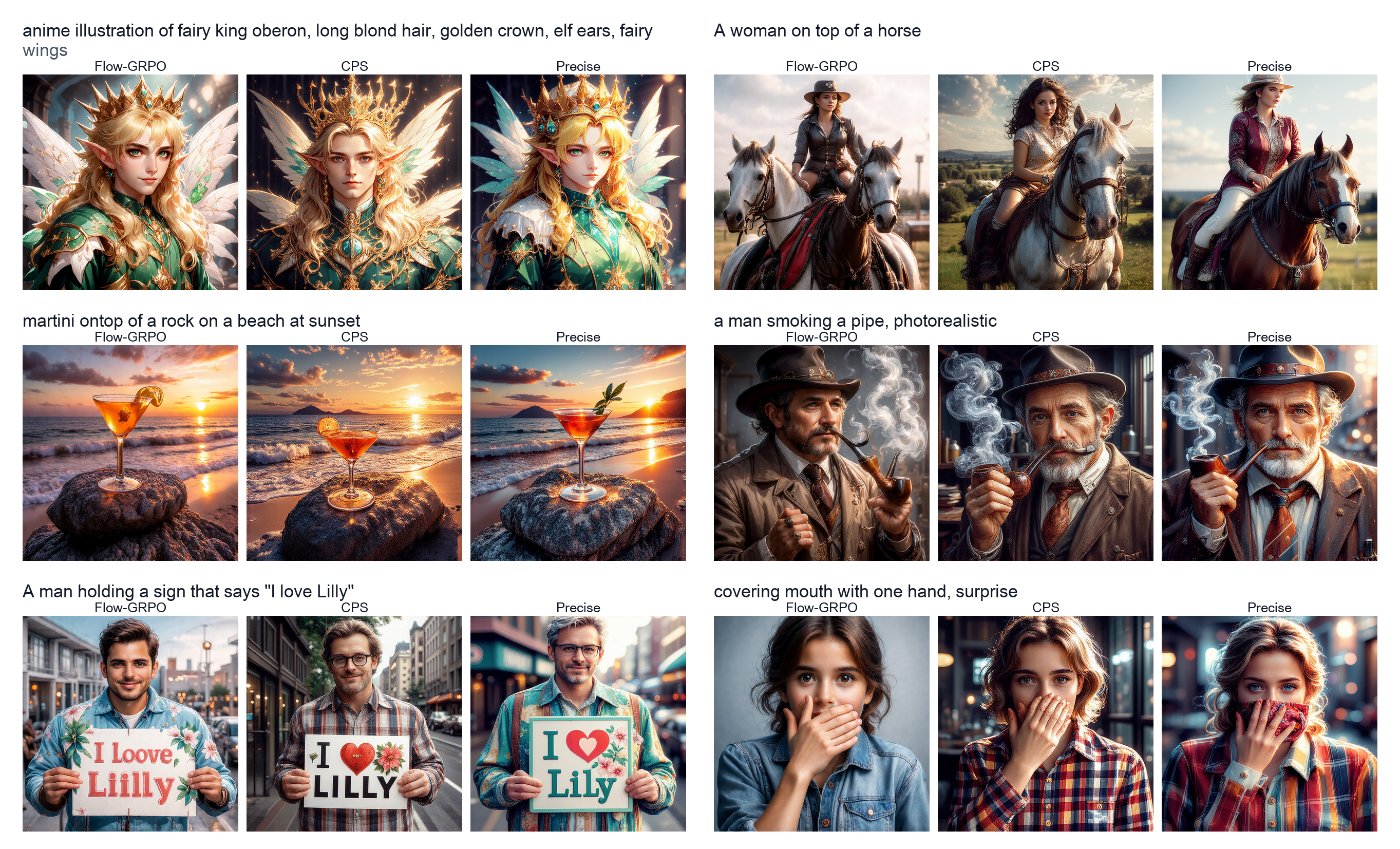}
  \caption{Qualitative examples under the 30-NFE protocol. \textsc{Precise} remains the most stable on identity and scene layout.}
  \label{fig:qualitative_pickscore_30}
\end{figure*}

\section{Conclusion}

We presented \textsc{Precise}, a stochastic sampler for online RL post-training
of flow-matching models. \textsc{Precise} treats the sampler as
part of the policy and addresses its two design choices: how to 
balance exploration and stability, and how to discretize the resulting SDE at
a small NFE budget. It combines a logSNR-based exploration
schedule with an SDE-consistent finite-step transition derived under a 
new frozen posterior-mean approximation.
Extensive experiments demonstrate that \textsc{Precise} consistently 
improves reward optimization over prior samplers, both in terms of final reward and training time.

\newpage

\bibliographystyle{omniweaving_arxiv}
\bibliography{refs}
\clearpage
\appendix
\section{Proofs and Extensions for the Sampler Analysis}

This appendix contains the formal derivations behind the sampler analysis in
the main text. The key point is local: posterior covariance controls how much
the clean-latent posterior mean moves, and once this mean is frozen, the reverse
SDE has an exact Gaussian transition. Lemma~\ref{lem:logsnr_decomp} gives the
first-order logSNR balance among the velocity term, score term, and stochastic
term. Lemma~\ref{lem:mean_jacobian} and
Proposition~\ref{prop:frozen_mean_error} bound the local error introduced by
freezing the posterior mean, and Theorem~\ref{cons:exact_update} gives the exact
finite-step transition of the frozen-posterior-mean SDE.

\subsection{Proof of Lemma~\ref{lem:logsnr_decomp}}
\label{app:proof_logsnr_decomp}

\begin{proof}
Over a reverse step from $t$ to $t-\Delta t$, write the one-step change as the sum of
three parts:
\begin{align*}
\text{(velocity term)} &\quad -\mathbf{u}_t \Delta t,
\\
\text{(score term)} &\quad \frac{1}{2}\varepsilon_t^2
\nabla_{\mathbf{z}}\log p_t(\mathbf{z}_t)\Delta t,
\\
\text{(stochastic term)} &\quad
\varepsilon_t\sqrt{\Delta t}\,\mathbf{w},
\qquad
\mathbf{w}\sim\mathcal{N}(\mathbf{0}, \mathbf{I}).
\end{align*}

The velocity term takes
\begin{equation*}
\mathbf{z}_t=(1-t)\mathbf{z}_0+t\boldsymbol{\epsilon}
\end{equation*}
to
\begin{equation*}
\mathbf{z}_{t-\Delta t}
=(1-(t-\Delta t))\mathbf{z}_0+(t-\Delta t)\boldsymbol{\epsilon},
\end{equation*}
so its logSNR increment is
\begin{align*}
\Delta\lambda_{\mathrm{vel}}
&=
\log\frac{(1-t+\Delta t)^2}{(t-\Delta t)^2}
- \log\frac{(1-t)^2}{t^2}
\\
&=
\frac{2}{t(1-t)}\Delta t + o(\Delta t).
\end{align*}

Next isolate the stochastic term while keeping the time index fixed at $t$.
This replaces the noise coefficient $t$ by
\begin{equation*}
\sqrt{t^2+\varepsilon_t^2\Delta t},
\end{equation*}
hence
\begin{align*}
\Delta\lambda_{\mathrm{sto}}
&=
\log\frac{t^2}{t^2+\varepsilon_t^2\Delta t}
\\
&=
-\frac{\varepsilon_t^2}{t^2}\Delta t + o(\Delta t).
\end{align*}

To isolate the score contribution, we freeze time at \(t\) and consider the
score drift together with its paired stochastic diffusion. With respect to the
frozen marginal \(p_t\), this pair forms a Langevin operator whose
Fokker--Planck adjoint annihilates \(p_t\). Hence the combined score-noise
substep has no first-order effect on the marginal. Consequently, the
first-order score contribution must cancel the isolated stochastic
perturbation:
\begin{equation*}
\Delta\lambda_{\mathrm{sco}}
=
-\Delta\lambda_{\mathrm{sto}}
=
\frac{\varepsilon_t^2}{t^2}\Delta t + o(\Delta t).
\end{equation*}
Together with the identity for the velocity term above, this proves the three
first-order contributions claimed in Lemma~\ref{lem:logsnr_decomp}.
\end{proof}

\subsection{Local stability of the frozen posterior-mean approximation}
\label{app:proof_frozen_mean_local}

The frozen posterior-mean approximation is controlled when the clean-latent
posterior mean changes slowly under local state and time perturbations. Let
\(\mathbf{m}_t(\mathbf{z})\triangleq
\mathbb{E}[\mathbf{z}_0\mid \mathbf{z}_t=\mathbf{z}]\) denote the clean-latent
posterior mean as a function of the current latent.

\begin{lemma}[Posterior-covariance control of the clean-latent posterior mean]
\label{lem:mean_jacobian}
For the flow-matching path
\(\mathbf{z}_t=(1-t)\mathbf{z}_0+t\boldsymbol{\epsilon}\), with
\(\boldsymbol{\epsilon}\sim\mathcal{N}(\mathbf{0},\mathbf{I})\), assume
regularity conditions that justify differentiating the posterior mean with
respect to the conditioning variable, such as a regular conditional density,
local domination, and finite second moments. Then
\begin{equation}
\nabla_{\mathbf{z}}\mathbf{m}_t(\mathbf{z})
=
\frac{1-t}{t^2}
\operatorname{Cov}(\mathbf{z}_0\mid \mathbf{z}_t=\mathbf{z}).
\label{eq:mean_jacobian}
\end{equation}
Consequently, on any convex local region where
\begin{equation*}
\lambda_{\max}\!\left(
\operatorname{Cov}(\mathbf{z}_0\mid \mathbf{z}_t=\mathbf{z})
\right)
\le
L_z\frac{t^2}{1-t},
\end{equation*}
this mean map is locally \(L_z\)-Lipschitz in state.
\end{lemma}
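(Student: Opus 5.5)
The plan is the standard Tweedie-type computation: write the posterior mean as a ratio of integrals against the Gaussian likelihood and differentiate under the integral sign. Let $\pi$ be the law of $\mathbf{z}_0$. The conditional density is $p(\mathbf{z}\mid\mathbf{z}_0)\propto\exp\bigl(-\|\mathbf{z}-(1-t)\mathbf{z}_0\|^2/(2t^2)\bigr)$, so
\begin{equation*}
\mathbf{m}_t(\mathbf{z})=\frac{\int \mathbf{z}_0\,e^{-\|\mathbf{z}-(1-t)\mathbf{z}_0\|^2/(2t^2)}\,\pi(\mathrm{d}\mathbf{z}_0)}{\int e^{-\|\mathbf{z}-(1-t)\mathbf{z}_0\|^2/(2t^2)}\,\pi(\mathrm{d}\mathbf{z}_0)}.
\end{equation*}
Its gradient in $\mathbf{z}$ is
\begin{equation*}
\nabla_{\mathbf{z}}\log p(\mathbf{z}\mid\mathbf{z}_0)=\frac{(1-t)\mathbf{z}_0-\mathbf{z}}{t^2}.
\end{equation*}

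First, I would use the assumed regularity (local domination and finite second moments) to move $\nabla_{\mathbf{z}}$ inside both integrals. Second, I would apply the quotient rule, which gives $\nabla\mathbf{m}=\mathbb{E}[\mathbf{z}_0\, g^\top\mid\mathbf{z}]-\mathbb{E}[\mathbf{z}_0\mid\mathbf{z}]\,\mathbb{E}[g\mid\mathbf{z}]^\top$ with $g=((1-t)\mathbf{z}_0-\mathbf{z})/t^2$. This is exactly the conditional covariance $\operatorname{Cov}(\mathbf{z}_0,g\mid\mathbf{z})$. Third, since $\mathbf{z}$ is fixed under the conditioning, the $-\mathbf{z}/t^2$ part of $g$ drops out of the covariance. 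That leaves $\frac{1-t}{t^2}\operatorname{Cov}(\mathbf{z}_0\mid\mathbf{z}_t=\mathbf{z})$, which is Eq.~\eqref{eq:mean_jacobian}. The Jacobian is symmetric, as it should be given that $\mathbf{m}_t$ is affine in the score of Eq.~\eqref{eq:score_posterior_mean}. An alternative check is to differentiate Eq.~\eqref{eq:score_posterior_mean} and use the second-order Tweedie identity for the Hessian of $\log p_t$; I would mention this only as a consistency check.

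For the Lipschitz consequence, the Jacobian is symmetric positive semidefinite, so its operator norm equals its largest eigenvalue. On the region this is $\frac{1-t}{t^2}\lambda_{\max}(\operatorname{Cov})\le L_z$. Convexity lets me apply the mean value inequality along the segment: $\mathbf{m}_t(\mathbf{z}')-\mathbf{m}_t(\mathbf{z})=\int_0^1\nabla\mathbf{m}_t(\mathbf{z}+s(\mathbf{z}'-\mathbf{z}))(\mathbf{z}'-\mathbf{z})\,\mathrm{d}s$, whose norm is at most $L_z\|\mathbf{z}'-\mathbf{z}\|$.

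The main obstacle is justifying differentiation under the integral sign, especially for the numerator, which carries the extra factor $\mathbf{z}_0$. The difference quotient involves $\|\mathbf{z}_0\|(\|\mathbf{z}_0\|+\|\mathbf{z}\|)$ times a bounded Gaussian factor uniformly on a neighborhood of $\mathbf{z}$. The finite-second-moment assumption makes this integrable, so dominated convergence applies. The denominator is strictly positive and continuous, so the quotient rule is legitimate. Since the statement explicitly assumes these regularity conditions, I would state this verification briefly rather than in full generality.
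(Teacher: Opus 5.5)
Your proof is correct and follows essentially the same route as the paper. The paper differentiates under the integral via the identity $\nabla_{\mathbf{z}}\mathbb{E}_{q_t}[f]=\operatorname{Cov}_{q_t}(f,\nabla_{\mathbf{z}}\log q_t)$, whose $-\nabla_{\mathbf{z}}\log p_t(\mathbf{z})$ term is exactly what your quotient rule produces from the normalizer, and it drops the $\mathbf{z}$-only terms as you do, then gets the Lipschitz bound from the Jacobian's operator norm and the mean-value theorem on the convex region; your domination sketch fills in a step the paper simply leaves to the stated regularity assumptions.
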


Lemma~\ref{lem:mean_jacobian} recovers the point-mass case as the
zero-covariance limit, where this mean is exactly constant. When the posterior
covariance is small but nonzero, it gives a local sensitivity bound that can be
converted into a one-step transition-law bound.

\begin{proposition}[One-step transition error under local regularity]
\label{prop:frozen_mean_error}
Consider one reverse step from \(t\) to \(t'=t-\Delta\) on a compact time
interval away from the endpoints. Let
\(\mathcal{K}_{t,t'}(\cdot\mid\mathbf{z}_t)\) denote the conditional
transition law of the original reverse SDE in Eq.~\eqref{eq:fm_sde_mean_form},
and let \(\widetilde{\mathcal{K}}_{t,t'}(\cdot\mid\mathbf{z}_t)\) denote the
conditional transition law of the frozen posterior-mean SDE obtained by
replacing \(\mathbf{m}_s(\mathbf{z}_s)\) with
\(\mathbf{m}_t(\mathbf{z}_t)\) throughout the step. Suppose the two processes
are synchronously coupled and remain in a local tube where the drift is
\(C_z\)-Lipschitz in the state and \(C_m\)-Lipschitz in the clean-latent
posterior mean. If
\begin{equation*}
\mathbb{E}\left\|
\mathbf{m}_s(\mathbf{Z}_s)-\mathbf{m}_t(\mathbf{z}_t)
\right\|
\le
\omega(s),
\qquad s\in[t',t],
\end{equation*}
then the transition laws satisfy
\begin{equation}
W_1\!\left(
\mathcal{K}_{t,t'}(\cdot\mid\mathbf{z}_t),
\widetilde{\mathcal{K}}_{t,t'}(\cdot\mid\mathbf{z}_t)
\right)
\le
C_m e^{C_z\Delta}
\int_{t'}^t \omega(s)\,\mathrm{d}s.
\label{eq:frozen_mean_error}
\end{equation}
In particular, if the posterior mean is \(L_s\)-Lipschitz in time and
\(L_z\)-Lipschitz in state along the step, and if
\(\mathbb{E}\|\mathbf{Z}_s-\mathbf{z}_t\|
\le B(t-s)+\bar{\varepsilon}\sqrt{t-s}\), then the right-hand side is at most
\begin{equation*}
C_m e^{C_z\Delta}
\left(
\frac{L_s+L_z B}{2}\Delta^2
+\frac{2L_z\bar{\varepsilon}}{3}\Delta^{3/2}
\right).
\end{equation*}
Thus the approximation is accurate on steps where the posterior mean is locally
stable in time and state.
\end{proposition}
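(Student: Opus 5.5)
We prove Proposition~\ref{prop:frozen_mean_error} by a synchronous-coupling Grönwall argument, followed by a direct integration of the assumed modulus.

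\textbf{Coupling and error equation.} We run the two reverse SDEs backward in time from the same initial state $\mathbf{z}_t$, driven by the same Brownian path. In Eq.~\eqref{eq:fm_sde_mean_form} write the drift as $b(s,\mathbf{z},\mathbf{m})$, where $\mathbf{m}$ is the clean-latent argument. The original process $\mathbf{Z}_s$ uses $\mathbf{m}_s(\mathbf{Z}_s)$, and the frozen process $\widetilde{\mathbf{Z}}_s$ uses $\mathbf{m}_t(\mathbf{z}_t)$. The diffusion coefficient $\varepsilon_s$ is state-independent, so the noise cancels in the difference $\mathbf{D}_s=\mathbf{Z}_s-\widetilde{\mathbf{Z}}_s$. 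Hence $\mathbf{D}_s$ satisfies a random ODE in reversed time $r=t-s\in[0,\Delta]$ with $\mathbf{D}=0$ at $r=0$.

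\textbf{Grönwall bound.} Inside the tube, the triangle inequality and the two Lipschitz assumptions give
\[
\|b(s,\mathbf{Z}_s,\mathbf{m}_s(\mathbf{Z}_s))-b(s,\widetilde{\mathbf{Z}}_s,\mathbf{m}_t(\mathbf{z}_t))\|\le C_z\|\mathbf{D}_s\|+C_m\|\mathbf{m}_s(\mathbf{Z}_s)-\mathbf{m}_t(\mathbf{z}_t)\|.
\]
We integrate, take expectations (Fubini applies since the integrand is nonnegative), and use the hypothesis $\omega(s)$. This gives $e(r)\le C_z\int_0^r e+C_m\int\omega$, where $e=\mathbb{E}\|\mathbf{D}\|$. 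Because the forcing term is nondecreasing in $r$, Grönwall's inequality yields $e(\Delta)\le C_m e^{C_z\Delta}\int_{t'}^t\omega(s)\,\mathrm{d}s$. The pair $(\mathbf{Z}_{t'},\widetilde{\mathbf{Z}}_{t'})$ is a coupling of the two transition laws, so $W_1$ is at most $e(\Delta)$, which is Eq.~\eqref{eq:frozen_mean_error}.

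\textbf{Explicit rate.} Split the mean increment as
\[
\mathbf{m}_s(\mathbf{Z}_s)-\mathbf{m}_t(\mathbf{z}_t)=[\mathbf{m}_s(\mathbf{Z}_s)-\mathbf{m}_t(\mathbf{Z}_s)]+[\mathbf{m}_t(\mathbf{Z}_s)-\mathbf{m}_t(\mathbf{z}_t)].
\]
The time- and state-Lipschitz bounds give $\omega(s)\le L_s(t-s)+L_z\,\mathbb{E}\|\mathbf{Z}_s-\mathbf{z}_t\|\le (L_s+L_zB)(t-s)+L_z\bar\varepsilon\sqrt{t-s}$. The state constant $L_z$ can be obtained from Lemma~\ref{lem:mean_jacobian} on a convex region. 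Integrating, $\int_0^\Delta r\,\mathrm{d}r=\Delta^2/2$ and $\int_0^\Delta\sqrt r\,\mathrm{d}r=\tfrac23\Delta^{3/2}$, which produces the stated expression.

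\textbf{Main obstacle.} The delicate point is the localization. Lipschitz constants that hold only in a tube require both coupled paths to stay inside it for the whole step. We take this as given by the hypothesis "remain in a local tube". A more careful version would stop both processes at the exit time and add an exit-probability term. The compact time interval away from the endpoints keeps the $1/t$ and $\varepsilon_s^2/s^2$ coefficients bounded, so $C_z$ and $C_m$ are finite there.
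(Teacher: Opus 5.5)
Your proposal is correct and follows essentially the same route as the paper: a synchronous coupling in which the additive noise cancels, a Gr\"onwall bound on $\mathbb{E}\|\mathbf{Z}_s-\widetilde{\mathbf{Z}}_s\|$ with the endpoint coupling dominating $W_1$, and the same time/state split of $\mathbf{m}_s(\mathbf{Z}_s)-\mathbf{m}_t(\mathbf{z}_t)$ integrated over the step to produce the $\Delta^2$ and $\Delta^{3/2}$ terms. Your remark about stopping at the tube exit time is a fair refinement; the paper also simply absorbs this point into its ``remain in a local tube'' hypothesis.
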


\begin{proof}[Proof of Lemma~\ref{lem:mean_jacobian}]
Write \(a_t=1-t\). For a fixed \(t\in(0,1)\), the posterior density of
\(\mathbf{z}_0\) given \(\mathbf{z}_t=\mathbf{z}\) is proportional to
\begin{equation*}
p_0(\mathbf{x})
\exp\!\left(
-\frac{\|\mathbf{z}-a_t\mathbf{x}\|^2}{2t^2}
\right).
\end{equation*}
Let \(q_t(\mathbf{x}\mid\mathbf{z})\) denote this posterior density. Its
score with respect to the conditioning variable is
\begin{equation*}
\nabla_{\mathbf{z}}\log q_t(\mathbf{x}\mid\mathbf{z})
=
\frac{a_t\mathbf{x}-\mathbf{z}}{t^2}
-\nabla_{\mathbf{z}}\log p_t(\mathbf{z}).
\end{equation*}
Differentiating the posterior mean under the integral and using the standard
identity
\(\nabla_{\mathbf{z}}\mathbb{E}_{q_t}[f]
=\operatorname{Cov}_{q_t}(f,\nabla_{\mathbf{z}}\log q_t)\) gives
\begin{align*}
\nabla_{\mathbf{z}}\mathbf{m}_t(\mathbf{z})
&=
\operatorname{Cov}\!\left(
\mathbf{z}_0,
\frac{a_t\mathbf{z}_0-\mathbf{z}}{t^2}
-\nabla_{\mathbf{z}}\log p_t(\mathbf{z})
\;\middle|\; \mathbf{z}_t=\mathbf{z}
\right)
\\
&=
\frac{a_t}{t^2}
\operatorname{Cov}(\mathbf{z}_0\mid \mathbf{z}_t=\mathbf{z}),
\end{align*}
because the terms depending only on \(\mathbf{z}\) have zero posterior
covariance with \(\mathbf{z}_0\). Since \(a_t=1-t\), this proves
Eq.~\eqref{eq:mean_jacobian}. The Lipschitz statement follows by bounding the
operator norm of the Jacobian and applying the mean-value theorem on the
region.
\end{proof}

\begin{proof}[Proof of Proposition~\ref{prop:frozen_mean_error}]
Let \(\widetilde{\mathbf{Z}}_s\) denote the frozen posterior-mean process
coupled to \(\mathbf{Z}_s\) with the same Brownian path and the same initial
condition at time \(t\). The Brownian terms cancel in the synchronous coupling.
By the local Lipschitz assumptions, the coupled difference
\(\boldsymbol{\delta}_s\triangleq \mathbf{Z}_s-\widetilde{\mathbf{Z}}_s\)
satisfies the integral inequality
\begin{equation*}
\mathbb{E}\|\boldsymbol{\delta}_r\|
\le
\int_r^t C_z\mathbb{E}\|\boldsymbol{\delta}_s\|\,\mathrm{d}s
+
C_m\int_r^t
\mathbb{E}\left\|
\mathbf{m}_s(\mathbf{Z}_s)-\mathbf{m}_t(\mathbf{z}_t)
\right\|\,\mathrm{d}s,
\qquad r\in[t',t],
\end{equation*}
Gronwall's inequality and the definition of \(\omega\) give
\begin{equation*}
\mathbb{E}\|\boldsymbol{\delta}_{t'}\|
\le
C_m e^{C_z\Delta}
\int_{t'}^t \omega(s)\,\mathrm{d}s.
\end{equation*}
This endpoint coupling upper-bounds \(W_1\), proving
Eq.~\eqref{eq:frozen_mean_error}.

For the explicit Lipschitz corollary, decompose for \(s\in[t',t]\)
\begin{align*}
\left\|
\mathbf{m}_s(\mathbf{Z}_s)-\mathbf{m}_t(\mathbf{z}_t)
\right\|
&\le
\left\|
\mathbf{m}_s(\mathbf{Z}_s)-\mathbf{m}_t(\mathbf{Z}_s)
\right\|
+
\left\|
\mathbf{m}_t(\mathbf{Z}_s)-\mathbf{m}_t(\mathbf{z}_t)
\right\|
\\
&\le
L_s(t-s)+L_z\|\mathbf{Z}_s-\mathbf{z}_t\|.
\end{align*}
Combining this with the assumed path-increment bound and integrating over the
step gives the stated explicit rate.
\end{proof}

\subsection{Proof of Theorem~\ref{cons:exact_update}}
\label{app:proof_exact_update}

\begin{proof}
We prove the exact transition of the frozen-posterior-mean SDE.
Fix a step from $t$ to $t' < t$ and write
\begin{equation*}
\mathbf{m}\triangleq \mathbf{m}_t(\mathbf{z}_t).
\end{equation*}
Under Assumption~\ref{ass:frozen_mean}, the state-dependent value
$\mathbf{m}_s(\mathbf{z}_s)$ is replaced by this fixed value throughout the
interval $[t', t]$.
Equation~\eqref{eq:fm_sde_mean_form} then reduces to the linear SDE
\begin{equation}
\mathrm{d}\mathbf{z}_s
=
\left(
\frac{\mathbf{z}_s-\mathbf{m}}{s}
+ \frac{\varepsilon_s^2}{2s^2}
\bigl(\mathbf{z}_s-(1-s)\mathbf{m}\bigr)
\right)\mathrm{d}s
+ \varepsilon_s\,\mathrm{d}\mathbf{w}_s.
\label{eq:appendix_frozen_mean_sde}
\end{equation}

Introduce the descending-time parameter
\begin{equation*}
\tau \in [0, t-t'],
\qquad
s(\tau)\triangleq t-\tau,
\end{equation*}
and define the normalized residual
\begin{equation*}
\mathbf{r}_\tau
\triangleq
\frac{\mathbf{z}_{s(\tau)}-(1-s(\tau))\mathbf{m}}{s(\tau)}.
\end{equation*}
Because $s(\tau)$ decreases with $\tau$, we have $\mathrm{d}s=-\mathrm{d}\tau$.
Using
\begin{equation*}
\mathbf{z}_{s(\tau)}=(1-s(\tau))\mathbf{m}+s(\tau)\mathbf{r}_\tau
\end{equation*}
and substituting into Eq.~\eqref{eq:appendix_frozen_mean_sde}, a direct
calculation gives
\begin{equation}
\mathrm{d}\mathbf{r}_\tau
=
-\frac{\varepsilon_{s(\tau)}^2}{2s(\tau)^2}\mathbf{r}_\tau\,\mathrm{d}\tau
+ \frac{\varepsilon_{s(\tau)}}{s(\tau)}\,\mathrm{d}\mathbf{B}_\tau,
\label{eq:appendix_residual_ou}
\end{equation}
where $\mathbf{B}_\tau$ denotes the Brownian driver in descending time.

Equation~\eqref{eq:appendix_residual_ou} is a linear SDE. Its solution is
\begin{align*}
\mathbf{r}_\tau
&=
\exp\!\left(
-\frac{1}{2}\int_0^\tau
\frac{\varepsilon_{s(u)}^2}{s(u)^2}\,\mathrm{d}u
\right)\mathbf{r}_0
\\
&\quad+
\int_0^\tau
\exp\!\left(
-\frac{1}{2}\int_u^\tau
\frac{\varepsilon_{s(v)}^2}{s(v)^2}\,\mathrm{d}v
\right)
\frac{\varepsilon_{s(u)}}{s(u)}\,\mathrm{d}\mathbf{B}_u.
\end{align*}
The stochastic integral is Gaussian with mean zero. By It\^o isometry, its
covariance is
\begin{align*}
\int_0^\tau
\exp\!\left(
-\int_u^\tau \frac{\varepsilon_{s(v)}^2}{s(v)^2}\,\mathrm{d}v
\right)
\frac{\varepsilon_{s(u)}^2}{s(u)^2}\,\mathrm{d}u
\mathbf{I}
&=
\left(
1-\exp\!\left(
-\int_0^\tau \frac{\varepsilon_{s(u)}^2}{s(u)^2}\,\mathrm{d}u
\right)
\right)\mathbf{I}.
\end{align*}
Hence
\begin{equation*}
\mathbf{r}_\tau
=
e^{-A(s(\tau),t)/2}\mathbf{r}_0
+ \sqrt{1-e^{-A(s(\tau),t)}}\,\mathbf{w},
\end{equation*}
where $\mathbf{w}\sim\mathcal{N}(\mathbf{0}, \mathbf{I})$ is independent of
$\mathbf{r}_0$, and
\begin{equation*}
A(s(\tau),t)=\int_{s(\tau)}^t \frac{\varepsilon_u^2}{u^2}\,\mathrm{d}u.
\end{equation*}

Now
\begin{equation*}
\mathbf{r}_0
=
\frac{\mathbf{z}_t-(1-t)\mathbf{m}}{t}.
\end{equation*}
Evaluating the solution at $\tau=t-t'$ so that $s(\tau)=t'$ and then
transforming back to $\mathbf{z}_{t'}=(1-t')\mathbf{m}+t'\mathbf{r}_{t-t'}$
gives
\begin{align*}
\mathbf{z}_{t'}
&=
(1-t')\mathbf{m}
+ \frac{t'}{t}e^{-A(t',t)/2}
\bigl(\mathbf{z}_t-(1-t)\mathbf{m}\bigr)
+ t'\sqrt{1-e^{-A(t',t)}}\,\mathbf{w},
\end{align*}
which is exactly the claimed formula.
\end{proof}

\section{Why CPS does not guarantee consistency}
\label{app:cps_variance}

This appendix expands the CPS covariance argument in
Section~\ref{sec:analysis}. With perfect prediction,
Flow-GRPO~\citep{liu2025flow} is Euler-Maruyama applied to the oracle reverse
SDE, and \textsc{Precise} has the same first-order expansion, so both share the
correct continuous-time limit as the step size goes to zero. CPS~\citep{wang2025coefficients}
addresses the point-mass failure mode of Euler-style transitions by preserving the
nominal signal and noise coefficients, but the same rule can collapse residual
posterior uncertainty for non-degenerate data distributions.

\paragraph{Posterior means discard residual uncertainty.}
Consider the flow-matching forward process from Eq.~\eqref{eq:fm_path},
\begin{equation*}
\mathbf{z}_t=(1-t)\mathbf{z}_0+t\boldsymbol{\epsilon},
\end{equation*}
and let
\begin{equation*}
\hat{\mathbf{z}}_0(t)\triangleq \mathbb{E}[\mathbf{z}_0\mid\mathbf{z}_t].
\end{equation*}
The posterior mean of the noise is
\begin{equation*}
\hat{\boldsymbol{\epsilon}}(t)
=
\mathbb{E}[\boldsymbol{\epsilon}\mid\mathbf{z}_t]
=
\frac{\mathbf{z}_t-(1-t)\hat{\mathbf{z}}_0(t)}{t}.
\end{equation*}

CPS~\citep{wang2025coefficients} applies a coefficient-preserving transition
\begin{equation*}
\mathbf{z}_{t'}^{\mathrm{CPS}}
=(1-t')\hat{\mathbf{z}}_0(t)
+k_1\hat{\boldsymbol{\epsilon}}(t)
+k_2\mathbf{w},
\qquad
k_1^2+k_2^2=t'^2,
\end{equation*}
where $\mathbf{w}\sim\mathcal{N}(\mathbf{0},\mathbf{I})$ is independent of
$(\mathbf{z}_0,\boldsymbol{\epsilon})$. Define
\begin{equation*}
\mathbf{L}\triangleq(1-t')\mathbf{z}_0+k_1\boldsymbol{\epsilon}.
\end{equation*}
Then the deterministic component of the CPS transition is the posterior mean of the
coefficient-preserved random variable $\mathbf{L}$:
\begin{equation*}
(1-t')\hat{\mathbf{z}}_0(t)+k_1\hat{\boldsymbol{\epsilon}}(t)
=
\mathbb{E}[\mathbf{L}\mid\mathbf{z}_t].
\end{equation*}

By the law of total covariance,
\begin{equation*}
\operatorname{Cov}(\mathbf{z}_{t'}^{\mathrm{CPS}})
=
\operatorname{Cov}(\mathbb{E}[\mathbf{L}\mid\mathbf{z}_t])
+k_2^2\mathbf{I}
=
\operatorname{Cov}(\mathbf{L})
-\mathbb{E}_{\mathbf{z}_t}
\left[
\operatorname{Cov}(\mathbf{L}\mid\mathbf{z}_t)
\right]
+k_2^2\mathbf{I}.
\end{equation*}
Since $\mathbf{z}_0$ and $\boldsymbol{\epsilon}$ are independent,
\begin{equation*}
\operatorname{Cov}(\mathbf{L})+k_2^2\mathbf{I}
=
(1-t')^2\operatorname{Cov}(\mathbf{z}_0)
+(k_1^2+k_2^2)\mathbf{I}
=
(1-t')^2\operatorname{Cov}(\mathbf{z}_0)+t'^2\mathbf{I}.
\end{equation*}
Thus CPS has covariance
\begin{equation*}
\operatorname{Cov}(\mathbf{z}_{t'}^{\mathrm{CPS}})
\preceq
(1-t')^2\operatorname{Cov}(\mathbf{z}_0)+t'^2\mathbf{I},
\end{equation*}
with strict trace contraction whenever
$\mathbb{E}_{\mathbf{z}_t}
\left[
\operatorname{tr}\operatorname{Cov}(\mathbf{L}\mid\mathbf{z}_t)
\right]>0$.
Thus coefficient preservation fixes the point-mass case, but it does not by
itself preserve the full marginal law. Whenever $\mathbf{z}_t$ does not
determine the coefficient-preserved random variable $\mathbf{L}$, replacing
$\mathbf{L}$ by its posterior mean removes the conditional covariance that
should remain in the next-step marginal.

\paragraph{Double-ring example.}
The double-ring example in Figure~\ref{fig:sampler_design} visualizes this
contraction as an inward bias. We use an equal-mass 2D distribution supported
on rings of radii $0.5$ and $1.0$, and approximate the oracle clean-latent
posterior mean
$\hat{\mathbf{z}}_0(t)=\mathbb{E}[\mathbf{z}_0\mid\mathbf{z}_t]$ with a dense
discrete support on both rings. We use $\eta=0.7$ for
Flow-GRPO~\citep{liu2025flow} and CPS~\citep{wang2025coefficients}, and
$\eta=1.5$ for \textsc{Precise}, matching the main experimental protocol.
As shown in Figure~\ref{fig:sampler_design}, CPS samples remain biased toward
the inner ring even at $N=80$.

Figure~\ref{fig:double_ring_outer_mass} isolates the large-$N$ regime by
tracking the CPS outer-ring mass as the NFE increases to
$N=1280$. The target outer-ring mass is $0.5$, but the curve does not approach that value as $N$ increases, demonstrating that the coefficient-preserving rule does not guarantee convergence to the correct distribution.

\begin{figure}[t]
  \centering
  \includegraphics[width=0.56\linewidth]{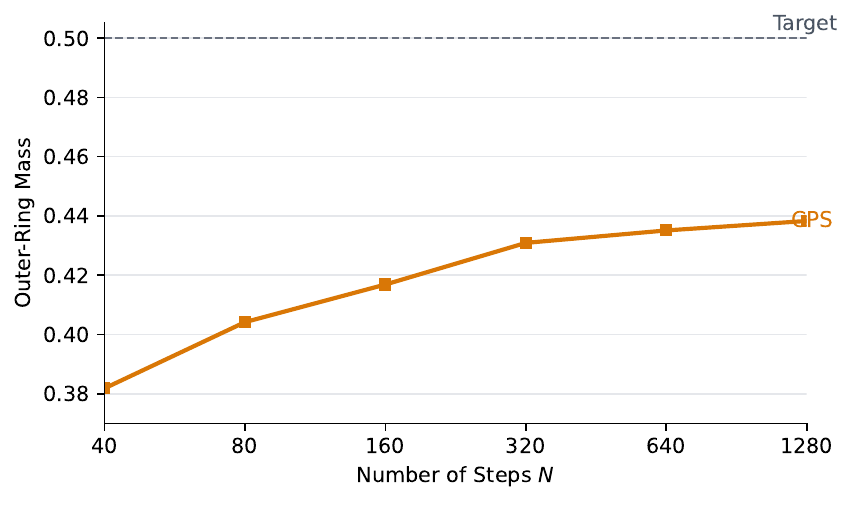}
  \caption{CPS outer-ring mass on the equal-mass double-ring distribution. The
  target value is $0.5$, but CPS remains biased toward the inner ring as NFE
  increases.}
  \label{fig:double_ring_outer_mass}
\end{figure}

\section{External Assets, Licenses, and Terms of Use}
\label{app:asset_licenses}

We use existing code, prompt datasets, pretrained backbones, reward models, and
evaluation metrics. Table~\ref{tab:asset_licenses} lists the original sources
and licenses. We use these assets for research evaluation and do not
redistribute third-party model weights or datasets beyond their stated terms.

Following the Flow-GRPO codebase~\citep{liu2025flow}, reported CLIPScore values
are the raw image-text similarity multiplied by $10/3$. This scale does not affect training dynamics since rewards are normalized to compute advantages.

\begin{table}[ht]
  \centering
  \scriptsize
  \setlength{\tabcolsep}{3pt}
  \caption{External assets used in this work.}
  \label{tab:asset_licenses}
  \begin{tabular}{@{}p{0.25\linewidth}p{0.42\linewidth}p{0.23\linewidth}@{}}
    \toprule
    Asset & Source & License \\
    \midrule
    Flow-GRPO codebase & Official Flow-GRPO repository~\citep{liu2025flow} & MIT \\
    Stable Diffusion 3.5 Medium & Stability AI SD3.5-M checkpoint~\citep{esser2024scaling} & Stability AI Community License \\
    FLUX.2 Klein 4B Base & Black Forest Labs FLUX.2 Klein 4B Base~\citep{blackforestlabs2025flux2,blackforestlabs2026flux2klein} & Apache-2.0 \\
    GenEval & GenEval benchmark~\citep{ghosh2023geneval} & MIT \\
    PickScore & PickScore benchmark and implementation~\citep{kirstain2023pick} & MIT \\
    CLIPScore / CLIP & CLIPScore with OpenAI CLIP (\texttt{clip-vit-large-patch14})~\citep{hessel2021clipscore,radford2021learning} & MIT \\
    HPSv2.1 & HPSv2.1 checkpoint and code~\citep{wu2023human} & Apache-2.0 \\
    ImageReward & ImageReward package and checkpoint~\citep{xu2023imagereward} & Apache-2.0 \\
    UnifiedReward v2 & UnifiedReward checkpoint~\citep{wang2025unified} & MIT \\
    Aesthetic predictor & LAION-Aesthetics Predictor V1~\citep{laion2022aestheticpredictor} & MIT \\
    \bottomrule
  \end{tabular}
\end{table}

\section{Broader Societal Impacts}
\label{app:societal_impacts}

This work can make visual generation systems more controllable and efficient to
adapt, supporting creative, accessibility, and educational uses. However,
stronger post-training can also increase misuse risks, amplify reward-model
bias, reduce diversity, or weaken safeguards, so deployment should include
audits, safety filters, provenance mechanisms where appropriate, and evaluations
for bias, diversity, memorization, and misuse-sensitive content.

\end{document}